\newtheorem{theorem}{Theorem}
\newtheorem{proposition}[theorem]{Proposition}
\newtheorem{remark}[theorem]{Remark}
\newtheorem{definition}[theorem]{Definition}
\newtheorem{corollary}[theorem]{Corollary}
\theoremstyle{definition}
\newcommand{\R}{\mathbb{R}}
\renewcommand{\d}{\mathrm{d}}
\renewcommand{\P}{\mathcal{P}}
\newcommand{\E}{\mathbb E}
\title{Y-Diagonal Couplings: Approximating Posteriors with Conditional Wasserstein Distances}
\author{Jannis Chemseddine, Paul Hagemann, Christian Wald}
\begin{document}

\maketitle
\begin{abstract}
  In inverse problems, many conditional generative models approximate the posterior measure by minimizing a distance between the joint measure and its learned approximation. While this approach also controls the distance between the posterior measures in the case of the Kullback--Leibler divergence, it does not hold true for the Wasserstein distance. We will introduce a conditional Wasserstein distance with a set of restricted couplings that equals the expected Wasserstein distance of the posteriors. By deriving its dual, we find a rigorous way to motivate the loss of conditional Wasserstein GANs. We outline conditions under which the vanilla and the conditional Wasserstein distance coincide. Furthermore, we will show numerical examples where training with the conditional Wasserstein distance yields favorable properties for posterior sampling.
\end{abstract}

\section{Introduction}
Many sampling algorithms for the posterior in Bayesian inverse problems perform learning on some joint measure. This means that given some observations $y$ with measure $P_Y$ one learns some probability measure $P_{Y,G_{\theta}}$, where $G_{\theta}$ also depends on $y$.
Many approaches minimize some loss of the form $$L(\theta) = D(P_{Y,X},P_{Y,G_{\theta}}),$$
where $D$ denotes a suitable distance on the space of joint measures. For instance this is done in the frame of conditional (stochastic) normalizing flows \cite{ardizzone2019guided,HHS22,HHG2023,winkler2019learning}, conditional GANs \cite{mirza2014conditional} or even for conditional gradient flows \cite{du2023nonparametric} for the Wasserstein metric.

A recent paper \cite{CondWasGen} investigated the relation between the joint measures $D(P_{Y,Z}, P_{Y,X})$ and its relation to the expected error between the posterior $E_{y \sim P_Y}\left[D(P_{Z|Y=y}, P_{X|Y=y}) \right].$ For the Kullback--Leibler (KL) divergence, it follows from the chain rule of the KL divergence \cite[Theorem 2.5.3]{cover_inf} that 
 $$E_{y \sim P_Y}\left[\mathrm{KL}(P_{X|Y=y}, P_{Z|Y=y}) \right] = \mathrm{KL}(P_{Y,X}, P_{Y,Z}).$$  

Such results are important as they show that it is possible to approximate the posterior by approximating the joint distribution. The paper \cite{altekrueger2023conditional} shows robustness of such conditional generative models under the assumption that the expected error to the posterior $E_{y \sim P_Y}\left[W_1(P_{X|Y=y}, P_{Z|Y=y}) \right]$ is small.

In \cite[Theorem 2]{CondWasGen} it is claimed that for $D = W_1$ (the Wasserstein-1 metric \cite{villani2009optimal}) it holds that  

$$W_1(P_{Y,X}, P_{Y,Z}) = \mathbb{E}_{y\sim P_Y}\left[W_1(P_{X|Y=y}, P_{Z|Y=y})\right].$$

In general we provide a counterexample to this claim. Intuitively, the issue arises when the optimal transport plan needs to transport mass in the $Y$-component. 
This is the motivation for only considering plans that do not have mass transport in the $Y$-component, which is also what the paper \cite{CondWasGen} uses in their proof. It leads to a definition of the conditional Wasserstein distance which we will denote by $W_{p,Y}$. The main change is to restrict the set of transport plans  to $\Gamma_Y=\Gamma_Y(P_{Y,X},P_{Y,Z})$, the set of plans $\alpha$ s.t. $(\pi_{1,3})_{\sharp}\alpha = \Delta_{\sharp}P_{Y}$ where $\Delta(y) = (y,y)$ is the diagonal map. Then we define the conditional Wasserstein distance $W_{p,Y}^p(P_{Y,X},P_{Y,Z})$ as
 \begin{align*}
     \underset{\alpha\in \Gamma_Y}{\mathsf{inf}}\int\|(y_1,x_1)-(y_2,x_2)\|^p\d \alpha
 \end{align*}

Inspired by \cite{CondWasGen}, we show that this conditional Wasserstein distance indeed corresponds to an expectation over the posteriors, i.e. $$W_{p,Y}^p(P_{Y,X},P_{Y,Z}) = \mathbb{E}_{y\sim P_Y}\left[W_p^p(P_{X|Y=y}, P_{Z|Y=y})\right].$$

Then, deriving the dual formulation as in the theory of optimal transport \cite{villani2009optimal}, we can relate our $W_{1,Y}$ to the loss used in \cite{adler2018deep, martin2021exchanging} under milder conditions and with a slightly different class of discriminators. This yields further insights on conditional Wasserstein GANs (WGAN) \cite{adler2018deep,CondWasGen, liu2021wasserstein, ZHENG20201009}, where the loss function was heuristically motivated. In particular we are able to prove more regularity of the dual functions than in \cite{adler2018deep, martin2021exchanging}. Furthermore, we will find suitable conditions under which the conditional Wasserstein distance $W_{p,Y}$ and $W_p$ are close. This also relates to Wasserstein flows which target the joint distribution \cite[Section 4.2]{du2023nonparametric}. Our approach will be supported by numerical experiments validating our theoretical results. 
\paragraph{Contributions}
\begin{itemize}
\item We introduce the conditional Wasserstein distance and highlight its relevance to conditional generative models for inverse problems. 
\item We calculate its dual and recover standard WGAN loss formulations \cite{adler2018deep,CondWasGen,  martin2021exchanging} with more regularity than in previous literature.
\item We identify situations in which it is sufficient to learn with the vanilla Wasserstein distance and still guarantee posterior recovery. 
\item We show how to train (approximately) with the conditional Wasserstein distance \emph{without} invoking the dual formulation,  using differentiable OT algorithms \cite{feydy2019interpolating}.
\end{itemize}
\paragraph{Related work}
Our work operates in the intersection of conditional generative modelling \cite{adler2018deep, ardizzone2019guided, mirza2014conditional} and (computational) optimal transport \cite{MAL-073, villani2009optimal}. Most related to our work are the conditional Wasserstein GAN papers, which do not agree on whether to impose the Lipschitz condition on both components or only with respect to $x$ \cite{adler2018deep, CondWasGen, ray2023solution}. In the GAN literature for example \cite{shahbazi2022collapse}, observe a lack of diversity in samples generated by conditional GANs, which is dubbed conditioning collapse. 
The recent work \cite[Theorem 2]{kim2023wasserstein} derive an inequality based on restricting the admissible couplings in the their optimal transport formulation to so called conditional sub-couplings. However, their paper is focused on learning geodesics whereas our paper is focused on understanding the theoretical foundations of these metrics. 

In optimal transport literature there has been a stream of class conditional optimal transport distances used in domain adaption \cite{nguyen2022cycle,rakotomamonjy2021optimal}. In particular conditional OT as in \cite{tabak2021data} is relevant as they consider optimal transport plans for each condition $y$ minimizing $\mathbb{E}_y[W_1(P_{X|Y=y}, G(\cdot,y)_{\#}P_Z)]$. However they relax their problem using a KL divergence.
Similarly the field of gradient flows \cite{ambrosio2021lectures, gig} investigate the same object which shows up as the tangent space of the the Wasserstein space. In \cite[Remark 7]{hagemann2023posterior} an inequality between the joint Wasserstein and the expected value over the conditionals is derived , crucially the result requires compactly supported measures and regularity of the associated posterior densities.

In \cite{bunne2022supervised} the supervised training of conditional Monge maps is proposed, for which they solve the dual using convex neural networks. As a follow-up \cite{manupriya2023empirical} proposed a relaxation which only needs the samples from the joint distribution involving the MMD. 

Apparently similar objects are treated frequently in the literature. Therefore, we want to contribute to the theoretical foundations of these approaches by introducing the conditional Wasserstein distance. 

\section{Background}
The motivation for our investigation comes from Bayesian inverse problems of the form $Y = f(X) + \xi$, where $f$ denotes the forward operator, $\xi$ an appropriate noise model, $Y$ the random vector of observations and $X$ the hidden random vector of parameters. We will denote their measures by $P_Y$ and $P_X$ respectively. The joint measure will be denoted by $P_{Y,X}$ and the posterior measure given $y$ by $P_{X|Y=y}$ which is the disintegration of $P_{Y,X}$ by $P_Y$ at point $y$. For more information on Bayesian inverse problems we refer to \cite{stuart_2010}. 
For a two measurable spaces $X,Y$, a measure $\mu$ on $X$ and a measurable function $f:X\to Y$ we denote the push forward measure on $Y$ by $f_{\sharp}\mu$. For a product space $\prod_{i}X_i$ we denote the projection onto the $i_1,\ldots,i_n$-th component by $\pi^{i_1,\ldots,i_n}$. 
Let $X \subseteq \mathbb{R}^d$ with the Borel $\Sigma$ algebra and let $\mathcal{P}_p(X)$ be the set of all probability measures on $X$ with finite $p$-th moments. 
The metric on $\mathcal{P}_p(X)$ will be the Wasserstein-p metric \cite{villani2009optimal}, which is given by $W_p(\mu, \nu) = \left(\inf_{\gamma \in \Gamma} \int \Vert x-y \Vert^p d\gamma(x,y)\right)^{\frac 1 p}$ for $p\in [1,\infty)$. Here $\Gamma=\Gamma(\mu,\nu)$ denotes the set of all couplings, i.e. all probability measures 
$\gamma\in \P(X\times X)$ with marginals $\pi^1_{\sharp}\gamma=\mu$ and $\pi^2_{\sharp}\gamma =\nu$. For two empirical measures $\mu=\frac 1 n\sum_{i=1}^n\delta_{a_i}$ and $\nu=\frac 1 n \sum_{i=1}^n\delta_{b_i}$ with the same number of particles the Wasserstein distance can also be written \cite[Proposition 2.1]{MAL-073} as
\begin{align}\label{eq:assign}
W_p^p = \inf_{\pi\in S_n}\frac 1 n \sum_{i=1}^n\Vert a_i-b_{\pi(i)}\Vert^p.
\end{align}
Here $S_n$ is the set of permutations and $\pi\in S_n$ coresponds to the coupling $\frac 1 n \sum_i \delta_{(a_i,b_{\pi(i)})}$.

\section{A Simple Example}
We will first provide a simple example showing that one cannot expect equality between $W_1(P_{Y,X}, P_{Y,Z})$ and $\mathbb{E}_y(W_1[P_{X|Y=y}, P_{Z|Y=y})]$ to hold true. This will be enlightening as it invalidates a claim in \cite{CondWasGen} and motivates our interest in coupling plans that are diagonal in the first variable.

First we fix a probability space $(\Omega, \mathcal{F}, P)$ with $\Omega = (\omega_1, \omega_2)$, $\mathcal{F} = 2^{\Omega}$ and the uniform distribution $P(\omega_1) = P(\omega_2) = \frac{1}{2}$.
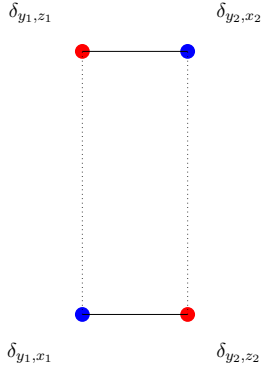
\begin{figure}
\centering
\scalebox{0.7}{
\begin{tikzpicture}
    \fill [blue] (0,0) circle [radius=4pt];
    \node[below left=14pt of {(0,0)}, outer sep=2pt,fill=white] {$\delta_{y_1,x_1}$};
    \fill [red] (0,5) circle [radius=4pt];
    \node[above left=14pt of {(0,5)}, outer sep=2pt,fill=white] {$\delta_{y_1,z_1}$};
    \fill [red] (2,0) circle [radius=4pt];
    \node[below right=14pt of {(2,0)}, outer sep=2pt,fill=white] {$\delta_{y_2,z_2}$};
    \fill [blue] (2,5) circle [radius=4pt];
    \node[above right=14pt of {(2,5)}, outer sep=2pt,fill=white] {$\delta_{y_2,x_2}$};
    \draw [black] (0,0) -- (2,0);
    \draw [black] (0,5) -- (2,5);

    \draw [dotted, black] (0,0) -- (0,5);
    \draw [dotted, black] (2,0) -- (2,5);
\end{tikzpicture}
}

    \caption{Visualization of the example. The diagonal coupling is visualized via dotted lines, where the blue dots belong to $P_{Y,X}$ and the red dots correspond to $P_{Y,Z}$ . The optimal non-diagonal coupling is the solid line.}
    \label{fig:enter-label}
\end{figure}
Let $\|\cdot\|$ denote the 2 norm on $\R^2$. We define the following random variables for $n>0$
\begin{center}
\begin{tabular}{c| c c }
   & $\omega_1$ & $\omega_2$ \\ 
\hline
 X & 0 & n \\  
 Y & 0 & 1 \\   
Z & n& $0$    
\end{tabular}
\end{center}
Then we have 
\begin{align*}
P_{Y,X} = \frac 1 2 \delta_{1,n} + \frac 1 2\delta_{0,0} && P_{Y,Z}= \frac 1 2 \delta_{1,0} + \frac 1 2 \delta_{0,n}
\end{align*}
which implies
\begin{align*}
&W_1(P_{Y,X},P_{Y,Z})  \\
&\overset{\eqref{eq:assign}}{=} \frac{1}{2}\min\big\{\|(0,0)-(1,0)\| + \|(1,n)-(0,n)\|,\\
&\qquad\qquad \|(0,0)-(0,n)\|+\|(1,n)-(1,0)\|\big\}\\
&=\frac 1 2 + \frac 1 2 = 1
\end{align*}
Furthermore
\begin{align*}
&P_{X|Y=0}= \delta_{0}&&P_{X|Y=1}=\delta_{n}\\
&P_{Z|Y=0} = \delta_{n}&&P_{Z|Y=1} = \delta_0\\
&P_{Y} = \frac 1 2 \delta_0 + \frac 1 2 \delta_1
\end{align*}
and thus
\begin{align*}
\mathbb{E}_y[W_1(P_{X|Y=y},P_{Z|Y=y})] = \frac 1 2 n + \frac 1 2 n =n
\end{align*}
\color{black}

Hence we obtain for all $n>0$ that 
\begin{align}
W_1(P_{Y,X}, P_{Y,Z}) =\frac 1 n \mathbb{E}_y\left[W_1(P_{X|Y=y}, P_{Z|Y=y})\right].
\end{align}
Note that if we forbid the coupling to move mass across the $y$ direction, we actually would obtain equality, which motivates our definition of conditional Wasserstein distance. 

\begin{remark}
In the article \cite{CondWasGen}, they consider a summation metric on the $(X,Y)$-space namely $\Vert (x_1,y_1)- (x_2,y_2) \Vert_{sum} = \Vert x_1-x_2 \Vert + \Vert y_1-y_2 \Vert.$ However since for the elements in the norms of the summation metric and the 2-norm coincide our counterexample is still valid in this case.
\end{remark}
\section{Conditional Wasserstein Distance}
In general one can only expect the inequality
\[
W_1(P_{Y,Z},P_{Y,X}) \leq \mathbb{E}_y[W_1(P_{X|Y=y},P_{Z|Y=y})]
\] 
to hold true. This makes intuitive sense since by gluing the couplings from the conditionals together yields a coupling for the joint distribution. 

The main idea to obtain the other inequality for the conditional Wasserstein distance is to allow only couplings (or Monge maps) which leave the $Y$-component invariant. In terms of Monge maps this means that we are considering functions $(\mathrm{Id}, T(y,\cdot)): (y,x)\mapsto (y,T(y,x))$ where $T:\R^n\times \R^d\to\R$ and
$(\mathrm{Id}, T(y,\cdot))_{\#}P_{Y,X} = P_{Y,Z}$. The general case is handled in the follwing definition.

\begin{definition}{Conditional Wasserstein Distance.}
        Let $X,Z:\Omega\to B\subset\R^{d}, Y:\Omega\to A\subset\R^{n}$ be random variables of finite $p$-th moment. Let $\Gamma_Y(P_{Y,X},P_{Y,Z})$ be the set of plans $\alpha\in\Gamma(P_{Y,X}, P_{Y,Z})$ s.t. $(\pi_{1,3})_{\sharp}\alpha = \Delta_{\sharp}P_{Y}$ where $\Delta:A\to A^2$ is the diagonal map. We then define 
        \begin{align*}   
        &W_{p,Y}(P_{Y,X},P_{Y,Z}) = \\ &\left( \underset{\alpha\in \Gamma_Y(P_{Y,X}, P_{Y,Z})}{\mathsf{inf}}\int_{(A\times B)^2}\|(y_1,x_1)-(y_2,x_2)\|^p\d \alpha\right)^{\frac 1 p}
        \end{align*}
\end{definition}

\begin{proposition}\label{cond:plan}
     
    Let $X,Y,Z$ be as in the previous definition. Then
        \begin{align*}
               W_{p,Y}(P_{Y,X},P_{Y,Z})^p= \underset{y\sim P_Y}{\mathbb{E}}\big[ W_p^p(P_{X|Y=y},P_{Z|Y=y})\big]
        \end{align*}
\end{proposition}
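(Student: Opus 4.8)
The plan is to prove the identity by establishing the two inequalities separately, using a gluing/disintegration argument in both directions. The key object is the disintegration of any coupling $\alpha \in \Gamma_Y(P_{Y,X}, P_{Y,Z})$ with respect to the $Y$-coordinate.

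**First**, I would show "$\geq$". Take any $\alpha \in \Gamma_Y(P_{Y,X},P_{Y,Z})$. Since $(\pi_{1,3})_\sharp \alpha = \Delta_\sharp P_Y$, the coupling is supported on the set $\{(y_1,x_1,y_2,x_2) : y_1 = y_2\}$; hence we may disintegrate $\alpha$ over $P_Y$ as $\d\alpha = \d\alpha_y(x_1,x_2)\,\d P_Y(y)$, where $\alpha_y$ is a probability measure on $B \times B$. The marginal conditions on $\alpha$ translate, after disintegrating $P_{Y,X}$ and $P_{Y,Z}$ over $P_Y$ as well, into: for $P_Y$-a.e. $y$, $\alpha_y \in \Gamma(P_{X|Y=y}, P_{Z|Y=y})$. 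Because $y_1 = y_2$ $\alpha$-a.e., the cost integrand $\|(y_1,x_1) - (y_2,x_2)\|^p$ reduces to $\|x_1 - x_2\|^p$ $\alpha$-a.e. Therefore
\begin{align*}
\int \|(y_1,x_1)-(y_2,x_2)\|^p \,\d\alpha = \int_A \Big( \int_{B\times B} \|x_1-x_2\|^p \,\d\alpha_y \Big) \d P_Y(y) \geq \int_A W_p^p(P_{X|Y=y},P_{Z|Y=y}) \,\d P_Y(y),
\end{align*}
since each inner integral is at least the optimal transport cost between the two posteriors. Taking the infimum over $\alpha \in \Gamma_Y$ gives one inequality.

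**Second**, for "$\leq$", I would construct a near-optimal $\alpha \in \Gamma_Y$ by gluing optimal conditional couplings. For each $y$, pick $\gamma_y \in \Gamma(P_{X|Y=y}, P_{Z|Y=y})$ realizing (or $\varepsilon$-approximating) $W_p^p$; the measurable-selection step is to choose $y \mapsto \gamma_y$ measurably, which one can do by standard measurable selection theorems for the optimal-transport problem (this is where I'd cite a result on measurability of $y \mapsto$ optimal plan, e.g. via Kuratowski–Ryll-Nardzewski or the approach in Villani). Then define $\alpha$ by $\d\alpha = \d\big(\Delta \times \mathrm{id}\big)\ldots$ — concretely, $\alpha$ is the measure on $(A\times B)^2$ such that $\int \phi \,\d\alpha = \int_A \int_{B\times B} \phi(y,x_1,y,x_2)\,\d\gamma_y(x_1,x_2)\,\d P_Y(y)$. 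One checks $\alpha$ has the correct marginals $P_{Y,X}$, $P_{Y,Z}$ and satisfies $(\pi_{1,3})_\sharp\alpha = \Delta_\sharp P_Y$, so $\alpha \in \Gamma_Y$; and its cost equals $\int_A W_p^p(P_{X|Y=y},P_{Z|Y=y})\,\d P_Y(y)$ (up to $\varepsilon$). This yields the reverse inequality, and letting $\varepsilon \to 0$ finishes the proof.

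**The main obstacle** I expect is the measurability bookkeeping: verifying that disintegrations exist (this needs the spaces to be, e.g., Polish, which holds since we work in $\R^n \times \R^d$), that the map $y \mapsto \gamma_y$ can be chosen measurably, and that the glued object $\alpha$ is genuinely a well-defined probability measure with the claimed marginals. The "physics" of the argument — cost collapses to $\|x_1-x_2\|^p$ on the diagonal, and $\Gamma_Y$ is exactly the set of gluings of conditional couplings — is straightforward; the care is entirely in the measure-theoretic justification. I would also double-check the finite-$p$-th-moment hypothesis is used to ensure all integrals are finite and the posteriors lie in $\mathcal{P}_p(B)$ for $P_Y$-a.e. $y$.
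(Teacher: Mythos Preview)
Your proposal is correct and follows essentially the same two-inequality strategy as the paper: disintegrate an arbitrary $\alpha\in\Gamma_Y$ over the diagonal to show each slice is a coupling of the posteriors (giving ``$\geq$''), and glue optimal conditional plans $\gamma_y$ along $\Delta_\sharp P_Y$ to build a competitor in $\Gamma_Y$ (giving ``$\leq$''). If anything you are more careful than the paper about the measurable-selection step $y\mapsto\gamma_y$, which the paper uses without further comment.
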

\begin{proof}
    First we show $\geq$. 
        Let $\gamma_{b_1,b_2}$ be the disintegration of $\alpha\in\Gamma_Y(P_{Y,X},P_{Y,Z})$ w.r. to $\pi^{1,3}_{\sharp}\alpha$. Then
        \begin{align}\label{mod:was}
                &\int_{(A\times B)^2}\|(y_1,x_1)-(y_2,x_2)\|^p\d \alpha  \\
                &= \int_{(A\times B)^2} \|(y_1,x_1)-(y_2,x_2)\|^p \d \gamma_{y_1,y_2}(x_1,x_2)\d (\pi_{1,3})_{\sharp}\alpha\\
                &= \int_{(A\times B)^2}\|(y_1,x_1)-(y_2,x_2)\|^p\d \gamma_{y_1,y_2}(x_1,x_2)\d \Delta_{\sharp}P_Y\\
                &= \int_{A\times B^2}\|(y,x_1)-(y,x_2)\|^p\d\gamma_{y,y}(x_1,x_2)\d P_Y(y)\\
                &= \int_{A\times B^2}\|x_1-x_2\|^p \d\gamma_{y,y}(x_1,x_2)\d P_Y(y).
        \end{align}
     
        Thus it is enough to show $\gamma_{y,y} \in \Gamma(P_{X|Y=y},P_{Z|Y=y})$ a.e. which means $(\pi_1)_{\sharp}\gamma_{y,y}=P_{X|Y=y}$ a.e. and similarly for $Z|Y$. Since $P_{X|Y=y}$ is defined via
        \[
            \int_{A\times B}f(y,x)\d P_{X|Y=y}(x)\d P_Y(y)=\int_{A\times B}f(y,x)\d P_{Y,X}
            \]
            for all Borel measurable functions $f:A\times B\to [0,\infty)$, this follows from
        \begin{align*}
                &\int_{A\times B} f(y,x_1)\d(\pi_1)_{\sharp}(\gamma_{y,y})(x_1)\d P_Y(y) \\ &=\int_{A^2\times B}f(y_1,x_1) \d(\pi_1)_{\sharp}(\gamma_{y_1,y_2})(x_1)\d (\Delta)_{\sharp}P_Y(y_1,y_2)\\
                &=\int_{A^2\times B}f(y_1,x_1) \d(\pi_1)_{\sharp}(\gamma_{y_1,y_2})(x_1)\d (\pi_{1,3})_{\sharp}\alpha(y_1,y_2)\\
                &=\int_{A^2\times B^2}f(y_1,x_1) \d\gamma_{y_1,y_2}(x_1,x_2)\d (\pi_{1,3})_{\sharp}\alpha(y_1,y_2)\\
                &= \int_{A^2 \times B^2}f(y_1,x_1)\d\alpha\\
                &= \int_{A\times B} f(y_1,x_1)(\pi_{1,2})_{\sharp}\alpha\\
                &= \int_{A\times B} f(y_1,x_1)\d P_{Y,X}(y_1,x_1).
        \end{align*}
        Now we show $\leq$. For $y\in A$ let $\gamma_y\in \Gamma(P_{X|Y=y},P_{Z|Y=y})$ be an optimal plan i.e. 
        \[
        \mathcal{W}_p^p(P_{X|Y=y},P_{Z|Y=y})= \int_{A^2} \|x_1-x_2\|^p\d \gamma_y 
         \]
         Let $\alpha = \int_{A}\d \delta_{y_1}(y_2)\d \gamma_{y_1}(x_1,x_2)\d P_Y(y_1)$. Then as in the proof of \cite[Theorem 2]{CondWasGen} 
        \begin{align*}
            &\int_{A}\mathcal{W}_p^p(P_{X|Y=y},P_{Z|Y=y})\d P_Y(y)\\&= \int_{A\times B^2} \|x_1-x_2\|^p\d \gamma_y(x_1,x_2)\d P_Y(y) \\
            &= \int_{(A\times B)^2}\|(y_1,x_1) - (y_2,x_2)\|^p\d \alpha.
        \end{align*}
        Thus it suffices to show that $\alpha \in\Gamma_Y(P_{Y,X},P_{Y,Z})$ which means $\pi^{1,3}_{\sharp}\alpha= \Delta_{\sharp}P_Y$, $\pi^{1,2}_{\sharp}\alpha=P_{Y,X}$ and $\pi^{3,4}_{\sharp}\alpha = P_{Y,Z}$. The first equality follows from
        \begin{align*}
        &\int_{A^2}f(y_1,y_2)\d \pi^{1,3}_{\sharp}\alpha \\&= \int_{(A\times B)^2}f\circ \pi^{1,3}(y_1,x_1,y_2,x_2)\d\alpha \\
                &= \int_{(A\times B)^2}f(y_1,y_2)\d \delta_{y_1}(y_2)\d \gamma_{y_1}(x_1,x_2)\d P_Y(y_1)\\
                &= \int_{A}f(y,y)\d P_Y(y) \\
                &= \int_{A^2}f(y_1,y_2)\d(\Delta_{\sharp}P_Y)(y_1,y_2)
        \end{align*}
        for all test functions $f$. The second follows from
        \begin{align}
            &\int_{A\times B}f(y,x)\d\pi^{1,2}_{\sharp}\alpha \\&= \int_{(A\times B)^2}f(y_1,x_1)\d \delta_{y_1}(y_2)\d \gamma_{y_1}(x_1,x_2)\d P_Y(y_1)\\
            &=\int_{A\times B}f(y,x)\d\pi^1_{\sharp}\gamma_y(x)\d P_Y(y)\\
            &=\int_{A\times B}f(y,x)\d P_{X|Y=y}(x)\d P_Y(y)\\
            &=\int_{A \times B}f(y,x) \d P_{Y,X}(y,x)
        \end{align}
        for all test functions $f$. The third equality follows analogously.
\end{proof}
\begin{corollary}
\label{cor_plan}
    In the definition of $W_{p,Y}$ the infimum is attained and for optimal plans $\gamma_y\in\Gamma(P_{X|Y=y},P_{Z,Y=y})$ the plan $\alpha=\int_{A}\d \delta_{y_1}(y_2)\d \gamma_{y_1}(x_1,x_2)\d P_Y(y_1)$ is optimal.
\end{corollary}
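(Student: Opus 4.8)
The plan is to reduce the statement to Proposition~\ref{cond:plan} together with two classical facts: existence of optimal plans for the \emph{unconstrained} Wasserstein distance on $\R^d$, and a measurable selection of such plans. First, for $P_Y$-almost every $y$ the infimum defining $W_p^p(P_{X|Y=y},P_{Z|Y=y})$ is attained by some $\gamma_y\in\Gamma(P_{X|Y=y},P_{Z|Y=y})$: the set of couplings with fixed marginals is tight (hence weakly relatively compact by Prokhorov) and closed, while the cost $(x_1,x_2)\mapsto\|x_1-x_2\|^p$ is nonnegative and lower semicontinuous, so a minimizer exists by the direct method. This is exactly the standard existence theorem in optimal transport, see e.g.\ \cite{villani2009optimal}.

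The step I expect to be the main obstacle is choosing the optimal plans $\gamma_y$ in a way that is \emph{measurable} in $y$, so that the glued object $\alpha=\int_{A}\d\delta_{y_1}(y_2)\,\d\gamma_{y_1}(x_1,x_2)\,\d P_Y(y_1)$ is a well-defined probability measure on $(A\times B)^2$. Here I would argue that $y\mapsto(P_{X|Y=y},P_{Z|Y=y})$ is Borel measurable (this is part of the defining property of the disintegration used throughout the proof of Proposition~\ref{cond:plan}), that the set-valued map assigning to $y$ its set of optimal couplings has closed values in the Polish space $\mathcal{P}(\R^d\times\R^d)$ equipped with the weak topology and is measurable, and then invoke a measurable selection theorem (Kuratowski--Ryll-Nardzewski, or directly the measurable-selection statements for optimal plans in \cite{villani2009optimal}) to produce a Borel map $y\mapsto\gamma_y$ with $\gamma_y$ optimal for $P_Y$-a.e.\ $y$.

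With such a measurable family in hand, the ``$\leq$'' part of the proof of Proposition~\ref{cond:plan} applies verbatim and shows that $\alpha\in\Gamma_Y(P_{Y,X},P_{Y,Z})$ and that $\int_{(A\times B)^2}\|(y_1,x_1)-(y_2,x_2)\|^p\,\d\alpha=\int_A W_p^p(P_{X|Y=y},P_{Z|Y=y})\,\d P_Y(y)$. By Proposition~\ref{cond:plan} the right-hand side equals $W_{p,Y}^p(P_{Y,X},P_{Y,Z})$, so $\alpha$ attains the infimum; in particular the infimum in the definition of $W_{p,Y}$ is attained, and any $\alpha$ glued from pointwise-optimal $\gamma_y$ is optimal. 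One may additionally remark, using that the disintegration $\gamma_{y,y}$ of \emph{any} admissible $\alpha$ lies in $\Gamma(P_{X|Y=y},P_{Z|Y=y})$ for a.e.\ $y$ (as established in the ``$\geq$'' direction), that conversely the disintegration of any optimal $\alpha$ must be a pointwise-optimal plan a.e.; this is not needed for the corollary but clarifies the structure of optimal plans.
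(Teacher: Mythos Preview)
Your proposal is correct and follows essentially the same route as the paper: the corollary is stated without separate proof and is meant to be read off from the ``$\leq$'' part of the proof of Proposition~\ref{cond:plan}, exactly as you do. The only difference is that you make explicit the measurable selection of $y\mapsto\gamma_y$, which the paper silently assumes when it writes ``for $y\in A$ let $\gamma_y$ be an optimal plan''; your added care here is appropriate and does not change the argument.
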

\begin{remark}
In the PhD thesis \cite[equation 4.7]{gig} a very similar object as our $\mathbb{E}_y\left[W_1(P_{X|Y=y}, P_{Z|Y=y})\right]$ is treated. They are interested in the Wasserstein tangent space at some measure $\mu$ on $\R^d$ which is $P_Y$ in our case. They then investigate the space $\mathcal{P}_2(\R^d\times \R^d)_{P_Y}$ which is the set of all measures $\alpha$ of finite second moment and $\pi^1_{\sharp}\alpha=P_Y$. In particular $P_{Y,X}, P_{Y,Z}$ belong to $\mathcal{P}_2(\R^d \times \R^d)_{P_Y}$. They then define a distance on $\mathcal{P}_2(\R^d\times \R^d)_{P_Y}$ by
\[
W_{P_Y}^2(\gamma_1,\gamma_2)=\int_{\R^d} W^2((\gamma_1)_y,(\gamma_2)_y)\d P_Y(y)
\]
for $(\gamma_i)_x$ the disintegration w.r. to $P_Y$. This is exactly the right hand side of proposition \ref{cond:plan}. In order to show that this is a metric they show an alternative description of $W_{P_Y}^2$ which is
\[
W_{P_Y}(\gamma_1,\gamma_2) = \underset{\alpha\in\mathcal{ADM}_{P_Y}(\gamma_1,\gamma_2)}{inf} \ \left(\int_{A\times B^2}\|x_1 -x_2\|^2\d \alpha\right)^{\frac 1 2}
\]where $\mathcal{ADM}_{P_Y}(\gamma_1,\gamma_2)=\{\alpha\in\mathcal{P}_2((\R^d)^3 ):\pi^{1,2}_{\sharp}\alpha=\gamma_1,\pi^{1,3}_{\sharp}\alpha=\gamma_2\}$. This is closely related to the left hand side of Proposition \ref{cond:plan} as we will see in the next proposition. Note that despite \cite{gig} only considers the case $p=2$ their arguments hold true for general $p\in[1,\infty)$. 
\end{remark}

\begin{proposition}
\label{prop:3plans}
    Consider random variables $X,Z :\Omega\to B$, $Y:\Omega\to A$ with finite $p$-th moment and let 
    \begin{align*}
    &\mathcal{ADM}_{P_Y}(\gamma_1,\gamma_2)
    =\{\alpha\in\mathcal{P}_p((A\times B^2):\pi^{1,2}_{\sharp}\alpha=\gamma_1,\pi^{1,3}_{\sharp}\alpha=\gamma_2\}.
    \end{align*}
    
    Then the map \[
    \pi^{1,2,4}_{\sharp}: \Gamma_Y(P_{Y,X},P_{Y,Z})\to \mathcal{ADM}_{P_Y}(P_{Y,X},P_{Y,Z})\]
    is a bijection and for $\alpha\in \Gamma_Y(P_{Y,X},P_{Y,Z})$ it holds that
    \begin{align*}
         &\int_{(A\times B)^2}\|(y_1,x_1)-(y_2,x_2)\|^p\d \alpha=\int_{A\times B^2}\|x_1-x_2\|^p\d \pi^{1,2,4}_{\sharp}\alpha.    
    \end{align*}
    In particular we have that $W_{P_Y} = W_{2,Y}$ for $p=2$.
\end{proposition}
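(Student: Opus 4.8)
The plan is to establish the bijection first and then the isometry-type identity, from which the final claim $W_{P_Y}=W_{2,Y}$ follows immediately by taking infima. First I would unpack the two coupling sets: a plan $\alpha\in\Gamma_Y(P_{Y,X},P_{Y,Z})$ lives on $(A\times B)^2$ with coordinates $(y_1,x_1,y_2,x_2)$, satisfies $\pi^{1,2}_\sharp\alpha=P_{Y,X}$, $\pi^{3,4}_\sharp\alpha=P_{Y,Z}$, and the diagonal constraint $\pi^{1,3}_\sharp\alpha=\Delta_\sharp P_Y$. The key structural observation, which I would prove first, is that this last constraint forces $\alpha$ to be concentrated on the set $\{y_1=y_2\}$: indeed $\int \|y_1-y_2\|\,\d\alpha = \int\|y_1-y_2\|\,\d(\pi^{1,3}_\sharp\alpha) = \int\|y-y\|\,\d P_Y = 0$, so $y_1=y_2$ $\alpha$-a.e. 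Consequently the map $\pi^{1,2,4}$ loses no information: on the support of $\alpha$ the dropped coordinate $y_2$ equals the retained coordinate $y_1$.

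Next I would check that $\pi^{1,2,4}_\sharp$ actually maps into $\mathcal{ADM}_{P_Y}$: for $\alpha\in\Gamma_Y$, setting $\beta=\pi^{1,2,4}_\sharp\alpha$ we have $\pi^{1,2}_\sharp\beta = \pi^{1,2}_\sharp\alpha = P_{Y,X}$ and $\pi^{1,3}_\sharp\beta = \pi^{1,4}_\sharp\alpha = \pi^{3,4}_\sharp\alpha$ using the $\alpha$-a.e.\ identity $y_1=y_2$, which equals $P_{Y,Z}$; also the finite $p$-th moment is inherited. For surjectivity and injectivity I would exhibit the inverse explicitly: given $\beta\in\mathcal{ADM}_{P_Y}(P_{Y,X},P_{Y,Z})$ on $A\times B^2$ with coordinates $(y,x_1,x_2)$, define $\alpha = (G)_\sharp\beta$ where $G(y,x_1,x_2)=(y,x_1,y,x_2)$ duplicates the $y$-coordinate. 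One checks $\alpha\in\Gamma_Y$: $\pi^{1,2}_\sharp\alpha=\pi^{1,2}_\sharp\beta=P_{Y,X}$, $\pi^{3,4}_\sharp\alpha=\pi^{1,3}_\sharp\beta=P_{Y,Z}$, and $\pi^{1,3}_\sharp\alpha=(y\mapsto(y,y))_\sharp\pi^1_\sharp\beta=\Delta_\sharp P_Y$ since $\pi^1_\sharp\beta=\pi^1_\sharp P_{Y,X}=P_Y$. Then $\pi^{1,2,4}_\sharp\alpha=\pi^{1,2,4}\circ G$ applied to $\beta$ is the identity on the relevant coordinates, giving $\pi^{1,2,4}_\sharp\alpha=\beta$; and conversely $G\circ\pi^{1,2,4}=\mathrm{Id}$ $\alpha$-a.e.\ on the support of any $\alpha\in\Gamma_Y$ because $y_1=y_2$ there, so the two maps are mutually inverse.

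For the cost identity I would again invoke $y_1=y_2$ $\alpha$-a.e.: then $\|(y_1,x_1)-(y_2,x_2)\|^2 = \|y_1-y_2\|^2+\|x_1-x_2\|^2 = \|x_1-x_2\|^2$ pointwise $\alpha$-a.e., so $\int\|(y_1,x_1)-(y_2,x_2)\|^2\,\d\alpha = \int\|x_1-x_2\|^2\,\d\alpha = \int\|x_1-x_2\|^2\,\d\pi^{1,2,4}_\sharp\alpha$, the last step being the change-of-variables formula for the pushforward (the integrand depends only on the coordinates $x_1,x_2$ which $\pi^{1,2,4}$ retains). Finally, taking the infimum over $\alpha\in\Gamma_Y(P_{Y,X},P_{Y,Z})$ on the left and transporting it via the bijection to an infimum over $\mathcal{ADM}_{P_Y}(P_{Y,X},P_{Y,Z})$ on the right yields, after taking square roots, $W_{2,Y}(P_{Y,X},P_{Y,Z}) = W_{P_Y}(P_{Y,X},P_{Y,Z})$.

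The main obstacle, though it is more a matter of care than of depth, is the rigorous handling of the diagonal constraint and the disintegration/a.e.\ statements: one must be careful that $\pi^{1,3}_\sharp\alpha=\Delta_\sharp P_Y$ genuinely implies $\alpha(\{y_1\neq y_2\})=0$ (the moment argument above, or directly that $\Delta_\sharp P_Y$ is supported on the diagonal of $A\times A$), and that the inverse map $G$ is well-defined as a Borel map so that the pushforwards make sense. There is also a minor typo to navigate — $\|\cdot\|$ here must be the Euclidean norm on $A\times B$ for the Pythagorean split to hold, matching the $p=2$ restriction in the statement — but no genuine difficulty beyond bookkeeping with projections.
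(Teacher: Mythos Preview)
Your proof is correct and structurally matches the paper's: both exhibit the $y$-duplication map $G(y,x_1,x_2)=(y,x_1,y,x_2)$ as the explicit inverse of $\pi^{1,2,4}_\sharp$ and use the diagonal constraint to kill the $y$-part of the cost. The difference is in how the diagonal constraint is exploited. The paper disintegrates $\alpha$ with respect to $\pi^{1,3}_\sharp\alpha=\Delta_\sharp P_Y$, argues via test functions, and reuses the computation \eqref{mod:was} from Proposition~\ref{cond:plan} for the cost identity; you instead observe directly that $y_1=y_2$ $\alpha$-a.e.\ and reason pointwise. Your route is slightly more elementary and avoids disintegration machinery; the paper's route has the advantage of recycling work already done in Proposition~\ref{cond:plan}.

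One small correction: your worry about the Pythagorean split and the restriction to $p=2$ is unnecessary. Once $y_1=y_2$ holds $\alpha$-a.e., you have $(y_1,x_1)-(y_2,x_2)=(0,x_1-x_2)$ a.e., hence $\|(y_1,x_1)-(y_2,x_2)\|^p=\|x_1-x_2\|^p$ for every $p\ge 1$ without invoking orthogonality. The cost identity therefore holds in the full generality stated in the proposition, not only for $p=2$.
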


\begin{corollary}
Let $\P_{p,Y}(A\times B)$ be the set of probability measures $\alpha\in \P_p(A\times B)$ with $\pi^1_{\sharp}\alpha=P_Y$. Then $W_{p,Y}$ is a metric on $\P_{p,Y}(A\times B)$.
\end{corollary}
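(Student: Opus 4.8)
The plan is to verify the three metric axioms for $W_{p,Y}$ on $\P_{p,Y}(A\times B)$, using Proposition~\ref{cond:plan} to reduce everything to known properties of the ordinary Wasserstein distance $W_p$ on $\P_p(B)$. Throughout, for $\mu\in\P_{p,Y}(A\times B)$ I write $\mu_y$ for the disintegration of $\mu$ with respect to $P_Y$, so $\mu_y\in\P(B)$ for $P_Y$-a.e.\ $y$, and by Fubini $\int_B\|x\|^p\d\mu_y(x)<\infty$ for $P_Y$-a.e.\ $y$. First note that for any $\mu,\nu\in\P_{p,Y}(A\times B)$ both share first marginal $P_Y$, so $\Gamma_Y(\mu,\nu)$ is nonempty (it contains the glued plan from the $\le$-part of the proof of Proposition~\ref{cond:plan}) and Proposition~\ref{cond:plan} applies verbatim with $P_{Y,X},P_{Y,Z}$ replaced by $\mu,\nu$, giving $W_{p,Y}(\mu,\nu)^p=\int_A W_p^p(\mu_y,\nu_y)\d P_Y(y)$. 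Finiteness of this integral (hence of $W_{p,Y}$) follows from the finite $p$-th moments via the product coupling bound $W_p^p(\mu_y,\nu_y)\le 2^{p-1}\big(\int\|x\|^p\d\mu_y+\int\|x\|^p\d\nu_y\big)$ integrated in $y$; in particular $y\mapsto W_p(\mu_y,\nu_y)$ lies in $L^p(P_Y)$, its measurability being exactly what makes the right-hand side of Proposition~\ref{cond:plan} meaningful. Symmetry is then immediate, either from $W_p(\mu_y,\nu_y)=W_p(\nu_y,\mu_y)$ together with Proposition~\ref{cond:plan}, or directly because swapping the two $A\times B$ factors is a cost-preserving bijection of $\Gamma_Y(\mu,\nu)$ onto $\Gamma_Y(\nu,\mu)$.

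For the identity of indiscernibles, $W_{p,Y}(\mu,\mu)^p=\int_A W_p^p(\mu_y,\mu_y)\d P_Y(y)=0$ by Proposition~\ref{cond:plan}. Conversely, if $W_{p,Y}(\mu,\nu)=0$ then $\int_A W_p^p(\mu_y,\nu_y)\d P_Y(y)=0$, so $W_p(\mu_y,\nu_y)=0$ for $P_Y$-a.e.\ $y$; since $W_p$ is a genuine metric on $\P_p(B)$ this forces $\mu_y=\nu_y$ for $P_Y$-a.e.\ $y$. As $\mu$ and $\nu$ have the same first marginal $P_Y$ and a.e.-equal disintegrations, essential uniqueness of disintegration gives $\mu=\nu$.

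It remains to prove the triangle inequality, which is the only step with real content — though it is short once Proposition~\ref{cond:plan} is available. Let $\mu_1,\mu_2,\mu_3\in\P_{p,Y}(A\times B)$. For $P_Y$-a.e.\ $y$ all three disintegrations have finite $p$-th moment, so the triangle inequality for $W_p$ on $\P_p(B)$ gives, pointwise in $y$,
\[
W_p\big((\mu_1)_y,(\mu_3)_y\big)\le W_p\big((\mu_1)_y,(\mu_2)_y\big)+W_p\big((\mu_2)_y,(\mu_3)_y\big).
\]
Taking $L^p(P_Y)$-norms, using monotonicity of the norm on this pointwise inequality of nonnegative functions and then Minkowski's inequality in $L^p(P_Y)$,
\[
\big\|W_p\big((\mu_1)_\cdot,(\mu_3)_\cdot\big)\big\|_{L^p(P_Y)}\le \big\|W_p\big((\mu_1)_\cdot,(\mu_2)_\cdot\big)\big\|_{L^p(P_Y)}+\big\|W_p\big((\mu_2)_\cdot,(\mu_3)_\cdot\big)\big\|_{L^p(P_Y)},
\]
and by Proposition~\ref{cond:plan} these three quantities are precisely $W_{p,Y}(\mu_1,\mu_3)$, $W_{p,Y}(\mu_1,\mu_2)$, $W_{p,Y}(\mu_2,\mu_3)$.

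The only point requiring care is the measurability of $y\mapsto W_p\big((\mu_i)_y,(\mu_j)_y\big)$ used implicitly above; I would note that this is already built into Proposition~\ref{cond:plan} and follows from measurable dependence of the optimal transport cost on its marginals. If one wished to avoid Proposition~\ref{cond:plan} entirely, the triangle inequality could instead be obtained by a gluing argument: lift optimal plans for $(\mu_1,\mu_2)$ and $(\mu_2,\mu_3)$ to $\mathcal{ADM}_{P_Y}$ via Proposition~\ref{prop:3plans}, glue them over the shared $A\times B$-factor carrying $\mu_2$, and project back, observing that the diagonal constraint in the $A$-component survives gluing because each of the two plans already satisfies it. Since the $L^p$-route is shorter, I would present that one and relegate the gluing argument to a remark.
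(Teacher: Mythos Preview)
Your proof is correct, but it takes a genuinely different route from the paper. The paper does not verify the metric axioms at all: it invokes Proposition~\ref{prop:3plans} to identify $W_{p,Y}$ with the distance $W_{P_Y}$ from \cite{gig}, and then simply cites \cite[Theorem~4.4]{gig} (stated there for $p=2$, with the remark that the argument goes through for general $p$). Your argument instead goes through Proposition~\ref{cond:plan}, rewriting $W_{p,Y}$ as the $L^p(P_Y)$-norm of the fiberwise Wasserstein distance and then reading off symmetry, identity of indiscernibles, and the triangle inequality from the corresponding properties of $W_p$ on $\P_p(B)$ together with Minkowski in $L^p(P_Y)$. This is more elementary and fully self-contained within the paper, at the price of having to mention the measurability of $y\mapsto W_p((\mu_i)_y,(\mu_j)_y)$, which you handle correctly. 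The gluing alternative you sketch at the end is in fact closer in spirit to how \cite{gig} proves the triangle inequality, so your remark connects the two approaches nicely.
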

\begin{proof}
It is shown in \cite[Theorem 4.4]{gig} that $W_{2,Y}$ is a metric. Since their proof also works for $p\in[1,\infty)$ the previous proposition yields the claim.
\end{proof}

\section{Dual Representation}
In this section we present a dual formulation of the conditional Wasserstein distance $W_{1,Y}$, similar in spirit to the usual dual formulations for the Wasserstein distance \cite{villani2009optimal}. The proof uses similar arguments as the short notes \cite{thickstun} and \cite{Basso2015AH}. In we will show that under the assumption of compact supports we have 
$W_{1,Y}(P_{Y,X},P_{Y,Z}) =  \sup_{h \in \tilde{D}} \mathbb{E}_{Y,X}[h] - \mathbb{E}_{Y,Z}[h]$
where $\tilde{D}$ denotes the set of upper semi-continuous functions $h = h(y,x)$ which are 1-Lipschitz with respect to $x$. 
 
In particular this closes a gap in the dual formulation considered in \cite{adler2018deep}, which was also discussed in \cite{martin2021exchanging}, where they fixed this gap under stronger assumptions. The dual formulation we derive gives more regularity of the dual functions. 

\begin{proposition}
     Let $X,Z:\Omega\to B\subset\R^d$ and $Y:\Omega\to A\subset\R^n$ be random variables and assume that $A,B$ are compact. Then we have the following dual representation
     \begin{align*}
         W_{1,Y}(P_{Y,X},P_{Y,Z}) =  \sup_{h \in \tilde{D}}\left\{ \mathbb{E}_{Y,X}[h] - \mathbb{E}_{Y,Z}[h]\right\}
     \end{align*}
     where $\tilde{D}$ denotes the set of bounded upper semi-continuous functions $h=h(y,x):A\times B\to \R$ satisfying $|h(y,x_1)-h(y,x_2)|\leq \|x_1-x_2\|$
     for all $y \in A,\ x_1,x_2\in B$.
\end{proposition}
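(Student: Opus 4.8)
The plan is to prove the two inequalities separately, in both cases reducing to a (constrained) Kantorovich–Rubinstein duality via Propositions \ref{cond:plan} and \ref{prop:3plans}. For the easy direction I would first show $\mathbb E_{Y,X}[h] - \mathbb E_{Y,Z}[h] \le W_{1,Y}(P_{Y,X},P_{Y,Z})$ for every $h\in\tilde D$: disintegrating $P_{Y,X}$ and $P_{Y,Z}$ along $P_Y$ gives $\mathbb E_{Y,X}[h]-\mathbb E_{Y,Z}[h] = \int_A\big(\int_B h(y,\cdot)\,\mathrm dP_{X|Y=y} - \int_B h(y,\cdot)\,\mathrm dP_{Z|Y=y}\big)\mathrm dP_Y(y)$, and since $h(y,\cdot)$ is $1$-Lipschitz the inner bracket is at most $W_1(P_{X|Y=y},P_{Z|Y=y})$ by the trivial half of Kantorovich–Rubinstein duality; integrating in $y$ and invoking Proposition \ref{cond:plan} closes this direction. (Measurability of the $y$-integrands comes from measurability of the disintegration kernel, and boundedness of $h$ keeps all integrals finite.)

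\emph{Hard direction, step 1 (duality).} Using Proposition \ref{prop:3plans}, $W_{1,Y}(P_{Y,X},P_{Y,Z}) = \inf\{\int_{A\times B^2}\|x_1-x_2\|\,\mathrm d\beta : \beta\ge0,\ \pi^{1,2}_\sharp\beta = P_{Y,X},\ \pi^{1,3}_\sharp\beta = P_{Y,Z}\}$, a linear program over measures on the compact space $A\times B^2$. I would then run the Fenchel–Rockafellar proof of Kantorovich duality exactly as in \cite{thickstun, Basso2015AH}, on $E = C(A\times B^2)$ with $E^* = \mathcal M(A\times B^2)$: set $\Theta(u) = 0$ if $u(y,x_1,x_2)\ge -\|x_1-x_2\|$ pointwise and $+\infty$ otherwise, and $\Xi(u) = \int\phi\,\mathrm dP_{Y,X} + \int\psi\,\mathrm dP_{Y,Z}$ if $u(y,x_1,x_2) = \phi(y,x_1) + \psi(y,x_2)$ for some $\phi,\psi\in C(A\times B)$ and $+\infty$ otherwise. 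Here $\Xi$ is well defined because the decomposition of $u$ is unique up to replacing $(\phi,\psi)$ by $(\phi+g(y),\psi-g(y))$, and $\int g\,\mathrm dP_{Y,X} = \int g\,\mathrm dP_Y = \int g\,\mathrm dP_{Y,Z}$. The qualification condition holds at $u_0\equiv1$ ($\Theta$ vanishes on a sup-norm ball around $u_0$, and $\Xi(u_0) = 1$). Computing conjugates gives $\Theta^*(-z) = \int\|x_1-x_2\|\,\mathrm dz$ for $z\ge0$ (else $+\infty$) and $\Xi^*(z) = 0$ iff $\pi^{1,2}_\sharp z = P_{Y,X}$ and $\pi^{1,3}_\sharp z = P_{Y,Z}$ (else $+\infty$), so Fenchel–Rockafellar yields $W_{1,Y}(P_{Y,X},P_{Y,Z}) = \sup\{\int\phi\,\mathrm dP_{Y,X} + \int\psi\,\mathrm dP_{Y,Z} : \phi,\psi\in C(A\times B),\ \phi(y,x_1)+\psi(y,x_2)\le\|x_1-x_2\|\}$.

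\emph{Hard direction, step 2 ($c$-transform).} Finally I would convert a generic admissible continuous pair $(\phi,\psi)$ into one of the form $(h,-h)$ with $h\in\tilde D$ without decreasing the objective. Take the $c$-transform fiberwise in $y$, $\phi^c(y,x) := \inf_{x'\in B}\big(\|x'-x\| - \phi(y,x')\big)$, and set $h := \phi^{cc}$. Then $\psi\le\phi^c$ and $\phi\le h$ pointwise (so replacing $(\phi,\psi)$ by $(h,\phi^c)$ only increases the objective), $\phi^c(y,\cdot)$ is $1$-Lipschitz, hence $h = \phi^{cc} = -\phi^c$, so $h(y,\cdot)$ is $1$-Lipschitz and $\int h\,\mathrm dP_{Y,X} + \int\phi^c\,\mathrm dP_{Y,Z} = \mathbb E_{Y,X}[h] - \mathbb E_{Y,Z}[h]$. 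The delicate point is joint regularity: since $B$ is compact and $(y,x,x')\mapsto\|x'-x\| - \phi(y,x')$ is jointly continuous, Berge's maximum theorem gives that $\phi^c$, and then $h$, are continuous — hence bounded and upper semi-continuous — on $A\times B$, so $h\in\tilde D$. Combined with step 1 this gives $\sup_{h\in\tilde D}\{\mathbb E_{Y,X}[h]-\mathbb E_{Y,Z}[h]\}\ge W_{1,Y}$, and with the easy direction the equality follows (and, en passant, the supremum is attained).

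\emph{Main obstacle.} I expect the bookkeeping in the Fenchel–Rockafellar step — verifying the qualification condition and, above all, identifying $\Theta^*$ and $\Xi^*$ with the transport cost and the marginal constraints on the measure side — to be the most delicate part, together with the joint-continuity check for the fiberwise $c$-transform, which is precisely where compactness of $B$ is genuinely used.
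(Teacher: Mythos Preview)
Your proof is correct and follows the same overall strategy as the paper --- convex duality followed by a fiberwise $c$-transform --- but the implementation differs in a few noteworthy ways. The paper works directly on $(A\times B)^2$ with the diagonal constraint encoded in an enlarged admissible set $M$ of nonnegative measures, and establishes strong duality via a minimax theorem of Aubin, verifying inf-compactness with the particular choice $f=g\equiv-1$; you instead first invoke Proposition~\ref{prop:3plans} to pass to $A\times B^2$ with ordinary (overlapping) marginal constraints and then run Fenchel--Rockafellar in the style of Villani, correctly observing that the overlap in the $y$-coordinate only introduces an additive ambiguity $g(y)$ that cancels because both joints have the same $Y$-marginal $P_Y$. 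For the $c$-transform the paper simply records that a pointwise infimum of continuous functions is upper semicontinuous, whereas your appeal to Berge's maximum theorem (using compactness of $B$) gives full joint continuity of $h$ --- stronger than $\tilde D$ requires, but cleaner. Finally, you isolate the easy inequality upfront via disintegration and Proposition~\ref{cond:plan}, while the paper absorbs it into a single chain of inequalities at the very end. Both routes are sound; yours is arguably tidier once Proposition~\ref{prop:3plans} is available, while the paper's is more self-contained. One minor caveat: your parenthetical ``the supremum is attained'' does not follow from Fenchel--Rockafellar alone (which yields attainment on the measure side), and without equicontinuity in $y$ an Arzel\`a--Ascoli argument is not immediate --- but this is irrelevant to the stated equality.
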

\begin{proof}
Denote by $C_b = C_b(A\times B)$ the space of continuous bounded functions on $A \times B$ and by $M$ the set of nonnegative finite Borel measures $\alpha$ on $A\times B$ which are supported  at most on the diagonal. Formally this means that there exists a nonnegative finite Borel measure $\beta$ on $B$ s.t. $\pi^{1,3}_{\sharp}\alpha = (\Delta_{B})_{\sharp}\beta$. Note that
\[
\pi^1_{\sharp}\alpha = \pi^{1}_{\sharp}\pi^{1,3}_{\sharp}\alpha = \pi^{1}_{\sharp}(\Delta_B)_{\sharp}\beta = (id_{B})_{\sharp}\beta=\beta 
\]
and thus for $\alpha\in\Gamma(P_{Y,X},P_{Y,Z})$ the condition $\pi^{1,3}_{\sharp}\alpha = (\Delta_{B})_{\sharp}\beta$ is equivalent to $\pi^{1,3}_{\sharp}\alpha = (\Delta_{B})_{\sharp}P_Y$.\\
A standard tool for enforcing $\alpha\in \Gamma(P_{Y,X},P_{Y,Z})$ , see e.g. \cite[Section 1.2]{santambrogio2015optimal} , is to consider
\begin{align*}
&\sup_{f,g\in C_b(A\times B)}\E_{P_{Y,X}}[f] + \E_{P_{Y,Z}}[g] - \int (f+g)\d\alpha\\
&=\begin{cases}
    0 \quad\text{ if } \alpha\in\Gamma(P_{Y,X},P_{Y,Z})\\
    \infty \quad\text{else.}
\end{cases}\end{align*}

Thus for the Lagrangian

\begin{align*}
    &L(\alpha,f,g):= \mathbb{E}_{P_{Y,X}}[f] + \mathbb{E}_{P_{Y,Z}}[g]  
    + \underset{(A\times B)^2}{\int} \left(\Vert(y_1,x_1)-(y_2,x_2)\Vert -f(y_1,x_1) -g(y_2,x_2)\right)  \d \alpha 
\end{align*}
we have that
\begin{align*}
 W_{1,Y}(P_{Y,X},P_{Y,Z}) &= \inf_{\alpha\in \Gamma_Y}\int\|(y_1,x_1)-(y_2,x_2)\|\d\alpha \\
 &=\inf_{\alpha \in M} \sup_{f,g \in C_b} L(\alpha, f,g).
 \end{align*}

We show in Appendix \ref{app:ex} that strong duality holds in our case. Thus we may exchange infimum and supremum and get that 
$$W_{1,Y}(P_{Y,X},P_{Y,Z}) = \sup_{f,g \in C_b} \inf_{\alpha \in M}  L(\alpha, f,g).$$

From this we can infer that 
\begin{align}\label{eq:leq}
f(y,x_1) + g(y,x_2) \leq \Vert x_1 - x_2 \Vert
\end{align}
for all $y\in A$ as otherwise the attained infimum is $-\infty.$ Thus we can assume that $L(\alpha,f,g)\geq \mathbb{E}_{P_{Y,X}}[f] + \mathbb{E}_{P_{Y,Z}}[g] $ and choosing the plan $\alpha = 0 \in M$, we obtain that
$$\inf_{\alpha\in M}L(\alpha,f,g) = \mathbb{E}_{P_{Y,X}}[f] + \mathbb{E}_{P_{Y,Z}}[g]$$ for all $(f,g)\in D$ with
\[
D:= \{(f,g) \in C_b^2: f(y,x_1) + g(y,x_2) \leq \Vert x_2 - x_2 \Vert\}.
\]
Then it follows that 
\begin{align}\label{eq:wsup}
W_{1,Y}(P_{Y,X},P_{Y,Z}) = \sup_{(f,g) \in D} \mathbb{E}_{Y,X}[f] + \mathbb{E}_{Y,Z}[g].
\end{align}

For $(f,g)\in D$ define $\tilde{f}(y,x) = \inf_u \Vert x-u \Vert - g(y,u)$ . Then 
\begin{align}
    \tilde{f}(y,x) &= \inf_u \{\|x-u\| - g(y,u)\} \\
    &\leq \inf_u \{\|x-z\| + \|z-u\| - g(y,u)\} \\
    &= \tilde{f}(y,z) + \|x-z\|
\end{align}
    shows the 1-Lipschitz continuity of $\tilde{f}$ with respect to the second component. Using \eqref{eq:leq} we obtain that $\tilde{f}(y,x) \geq f(y,x)$. Since $\tilde{f}(y,x) \leq \|x-x\| - g(y,x)$ by definition we conclude that
    \begin{align}\label{eq:ftild}
    f(y,x) \leq \tilde{f}(y,x) \leq -g(y,x).
    \end{align}

By \ref{eq:ftild} we see that $\tilde{f}$ is bounded. Additionally $\tilde{f}$ also possesses some regularity, namely as an infimum over continuous functions it is upper semicontinuous in $(y,x)$ and 1-Lipschitz with respect to the second component i.e. $\tilde{f}\in\tilde{D}$. 

We thus conlude
\begin{align*}
&W_{1,Y}(P_{Y,X},P_{Y,Z})\\
&\overset{{\eqref{eq:wsup}}}{=} \sup_{(f,g) \in D} \mathbb{E}_{Y,X}[f] + \mathbb{E}_{Y,Z}[g] \\
&\overset{\leq}{\eqref{eq:ftild}} \sup_{\tilde{f} = \tilde{f}(f,g), (f,g) \in D} \mathbb{E}_{Y,X}[\tilde{f}] - \mathbb{E}_{Y,Z}[\tilde{f}]\\
&\leq \sup_{h \in \tilde{D}} \mathbb{E}_{Y,X}[h] - \mathbb{E}_{Y,Z}[h] \\
&= \sup_{h \in \tilde{D}} \inf_{\alpha \in \Gamma_Y}\int_{(A\times B)^2}h(y_1,x_1)-h(y_2,x_2) \d \alpha\\
&= \sup_{h\in \tilde{D}}\inf_{\alpha \in \Gamma_Y}\int_{(A\times B)^2}h(y_1,x_1)-h(y_1,x_2)\d \alpha \\
&\leq \inf_{\alpha \in \Gamma_Y} \int \Vert x_1 - x_2 \Vert \d\alpha \\
& = \inf_{\alpha \in \Gamma_Y}\int \Vert (y_1,x_1)-(y_2,x_2)\Vert \d\alpha \\
&= W_{1,Y}(P_{Y,X},P_{Y,Z}).
\end{align*}
This finishes the proof that 

$$W_{1,Y}(P_{Y,X},P_{Y,Z}) = \sup_{h\in \tilde{D}} \mathbb{E}_{Y,X}[h] - \mathbb{E}_{Y,Z}[h],$$
which is the objective often used in conditional Wasserstein GAN training \cite{adler2018deep, CondWasGen}.
\end{proof}

\section{Relation between Wasserstein Distances}
In some settings the conditional Wasserstein distance is equal or close to the Wasserstein distance of the joint distribution, i.e., in this cases the optimal transport plan is (almost) contained in the restricted set. In a first example we will look at the case when $Y,X$ resp. $Y,Z$ are independent, where we will show that the optimal coupling is just the product of an optimal coupling between $X$ and $Z$ and the identity coupling of $Y$. This is formalized in the next Proposition and also shown in \cite[Proposition 14]{kim2023wasserstein}.

\begin{proposition}
\label{prop:ind}
    Let $X,Z:\Omega\to B$ and $Y:\Omega\to A$ be random variables of finite $p$-th moment,  such that $Y,X$ are independent and $Y,Z$ are independent. Then
    \[ W_p(P_{Y,X},P_{Y,Z})=W_{p,Y}(P_{Y,X},P_{Y,Z}) = W_p(P_X,P_Z)
    \]
\end{proposition}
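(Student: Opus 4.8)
The plan is to squeeze the three quantities between each other: I will show
\[
W_p(P_X,P_Z)\ \le\ W_p(P_{Y,X},P_{Y,Z})\ \le\ W_{p,Y}(P_{Y,X},P_{Y,Z})\ \le\ W_p(P_X,P_Z),
\]
which forces all three to coincide. Two of these inequalities hold with no independence assumption at all, and the independence hypothesis is only consumed in the last one.

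The middle inequality is essentially definitional: since $\Gamma_Y(P_{Y,X},P_{Y,Z})\subseteq\Gamma(P_{Y,X},P_{Y,Z})$, the infimum defining $W_{p,Y}$ ranges over a subset of the couplings defining $W_p$, so $W_p\le W_{p,Y}$. For the first inequality I would take an arbitrary $\gamma\in\Gamma(P_{Y,X},P_{Y,Z})$ and push it forward under $\pi^{2,4}\colon (y_1,x_1,y_2,x_2)\mapsto(x_1,x_2)$; the marginal conditions on $\gamma$ give $\pi^{2,4}_{\sharp}\gamma\in\Gamma(P_X,P_Z)$. Since the Euclidean norm on $A\times B$ satisfies the pointwise bound $\|(y_1,x_1)-(y_2,x_2)\|\ge\|x_1-x_2\|$, we get
\[
\int\|(y_1,x_1)-(y_2,x_2)\|^p\,\d\gamma\ \ge\ \int\|x_1-x_2\|^p\,\d\pi^{2,4}_{\sharp}\gamma\ \ge\ W_p^p(P_X,P_Z),
\]
and taking the infimum over $\gamma$ yields $W_p(P_{Y,X},P_{Y,Z})\ge W_p(P_X,P_Z)$.

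For the last inequality the independence enters. The slick route is to invoke Proposition~\ref{cond:plan}: $W_{p,Y}^p(P_{Y,X},P_{Y,Z})=\mathbb{E}_{y\sim P_Y}[W_p^p(P_{X|Y=y},P_{Z|Y=y})]$, and independence of $Y,X$ (resp. $Y,Z$) means $P_{X|Y=y}=P_X$ and $P_{Z|Y=y}=P_Z$ for $P_Y$-a.e.\ $y$, so the expectation collapses to $W_p^p(P_X,P_Z)$. Equivalently, and without citing Proposition~\ref{cond:plan}, I can exhibit the competitor directly: let $\sigma$ be an optimal coupling of $P_X$ and $P_Z$ and let $\alpha$ be the law of $(Y',X',Y',Z')$ where $Y'\sim P_Y$ and $(X',Z')\sim\sigma$ are drawn independently of $Y'$; by independence $\pi^{1,2}_{\sharp}\alpha=P_Y\otimes P_X=P_{Y,X}$ and $\pi^{3,4}_{\sharp}\alpha=P_{Y,Z}$, while $\pi^{1,3}_{\sharp}\alpha=\Delta_{\sharp}P_Y$, so $\alpha\in\Gamma_Y$, and its transport cost is exactly $\int\|x_1-x_2\|^p\,\d\sigma=W_p^p(P_X,P_Z)$.

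Chaining the three bounds closes the loop. This argument is largely bookkeeping; the only points that need a moment of care — and the closest thing to an obstacle — are the pointwise norm domination $\|(y_1,x_1)-(y_2,x_2)\|\ge\|x_1-x_2\|$, which is precisely where the choice of the Euclidean norm on $A\times B$ is used (a general norm would need the corresponding coordinate monotonicity), and verifying the marginal identities of the product plan $\alpha$, which is exactly where both independence hypotheses are spent.
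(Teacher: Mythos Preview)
Your proof is correct and follows essentially the same route as the paper: the paper also squeezes via $W_p(P_X,P_Z)\le W_p(P_{Y,X},P_{Y,Z})\le W_{p,Y}(P_{Y,X},P_{Y,Z})\le W_p(P_X,P_Z)$, proving the first inequality by pushing an arbitrary joint coupling forward under $\pi^{2,4}$ and the last by exhibiting the product plan $\gamma\times\Delta_Y$ for an optimal $\gamma\in\Gamma(P_X,P_Z)$, which is exactly your competitor $\alpha$. Your additional remark that the last inequality also follows from Proposition~\ref{cond:plan} via $P_{X|Y=y}=P_X$, $P_{Z|Y=y}=P_Z$ is a nice shortcut the paper does not mention.
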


One approach closely related to so-called conditional Wasserstein flows \cite{du2023nonparametric} consists in obtaining data samples $(y_i,x_i) \sim P_{Y,X}$ and starting at points $(y_i, z_i)$ for some random $z_i \sim P_Z$ independent of $Y$ and $X$ and then construct a flow where the samples $(y_i,z_i)$ are transported to $(y_i,x_i)$ to obtain new samples. In the next proposition we argue that in this setting the optimal coupling is "expected" to be diagonal along $Y$ for randomly drawn samples.
Recall that for empirical measures $\mu=\frac{1}{k}\sum_{i=1}^k \delta_{a_i},\nu=\frac 1 k\sum_{i=1}^k \delta_{b_i}$ a transport plan $\pi\in \Gamma(\mu,\nu)$ can be described as a matrix $\pi\in \R^{k\times k}$ such that the row and column sums are $1$. Then the quadratic cost $c$ is 
\[
c(\pi,\mu,\nu)=\sum_{i,j}\Vert a_i-b_j\Vert^2\pi_{i,j}
\] and $W_2^2(\mu,\nu) =\inf_{\pi}c(\pi)$. 
\begin{proposition}
\label{prop_deltas}Let $X,Z$ be independent random vectors on $B \subset R^d$ and $Y$ a random vector on $A \subset \R^n$. We assume that $Y$ is not constant i.e. $P_Y\neq \delta_a$ for any $a\in A$. Let $\xi_i=(y_i,x_i,z_i):\Omega\to A\times B^2$ for $i=1,\ldots,l$ be independent random vectors distributed as $P_{Y,X,Z}$. Let $\mu(\omega)=\frac 1 k \sum_{i=1}^k\delta_{y_i(\omega),x_i(\omega)}$, $\nu(\omega)=\frac 1 k \sum_{i=1}^k\delta_{y_i(\omega),z_i(\omega)}$ and let $\pi_{\Delta}$ be the diagonal coupling and let $\pi\neq\pi_{\Delta}$ be any other coupling. Define the random variables $\tilde{c}(\alpha):\omega\mapsto c(\alpha,\mu(\omega),\nu(\omega))$ for any coupling $\alpha$. Then
\[
\E_{\omega\in\Omega}[\tilde{c}(\pi_{\Delta}]<\E_{\omega\in \Omega}[\tilde{c}(\pi)].
\]
\end{proposition}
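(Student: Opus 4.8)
The plan is to evaluate the two expected costs in closed form and compare them term by term; note that both $\pi_\Delta$ and $\pi$ are fixed matrices, so the randomness enters only through the atoms of $\mu(\omega),\nu(\omega)$. Using the orthogonal split $\|(y_i,x_i)-(y_j,z_j)\|^2=\|y_i-y_j\|^2+\|x_i-z_j\|^2$ and writing a coupling as a matrix $(\pi_{ij})$ with unit row and column sums (so $\pi_\Delta$ is the $k\times k$ identity and $\sum_{i,j}\pi_{ij}=k$), we have
\[
\tilde c(\pi)(\omega)=\sum_{i,j}\big(\|y_i(\omega)-y_j(\omega)\|^2+\|x_i(\omega)-z_j(\omega)\|^2\big)\pi_{ij},\qquad \tilde c(\pi_\Delta)(\omega)=\sum_{i}\|x_i(\omega)-z_i(\omega)\|^2 .
\]

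First I would take expectations over $\omega$ and use that the $\xi_i=(y_i,x_i,z_i)$ are i.i.d.\ $\sim P_{Y,X,Z}$ together with the hypothesis $X\perp Z$. For $i\neq j$ the variables $x_i$ and $z_j$ come from different, hence independent, samples, and for $i=j$ they are independent by assumption; in every case $(x_i,z_j)$ has law $P_X\otimes P_Z$, so $\E\|x_i-z_j\|^2=M$ with $M:=\E_{X\sim P_X,\,Z\sim P_Z}\|X-Z\|^2$ a finite constant independent of $i,j$ and of the coupling. Similarly $\E\|y_i-y_j\|^2=0$ for $i=j$ and $\E\|y_i-y_j\|^2=V$ for $i\neq j$, where $V:=\E\|Y-Y'\|^2$ for independent $Y,Y'\sim P_Y$. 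The decisive point is $V>0$: if $\E\|Y-Y'\|^2=0$ then $Y=Y'$ a.s., which forces $Y$ to be a.s.\ constant and contradicts $P_Y\neq\delta_a$.

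Plugging in, $\E[\tilde c(\pi)]=M\sum_{i,j}\pi_{ij}+V\sum_{i\neq j}\pi_{ij}=kM+V\sum_{i\neq j}\pi_{ij}$, while $\E[\tilde c(\pi_\Delta)]=kM$ since the identity matrix has no off-diagonal mass. Hence $\E[\tilde c(\pi)]-\E[\tilde c(\pi_\Delta)]=V\sum_{i\neq j}\pi_{ij}$. To conclude I would invoke the elementary fact that a nonnegative matrix with unit row and column sums all of whose diagonal entries equal $1$ is the identity; therefore $\pi\neq\pi_\Delta$ implies $\sum_i\pi_{ii}<k$, i.e.\ $\sum_{i\neq j}\pi_{ij}>0$, and combined with $V>0$ this gives the strict inequality.

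I do not expect a real obstacle here — the argument is essentially bookkeeping — but two points need care. The first is checking that $\E\|x_i-z_j\|^2$ is the \emph{same} constant $M$ for all index pairs: this uses independence across samples when $i\neq j$ and exactly the standing hypothesis $X\perp Z$ when $i=j$, and it is here that the assumption is essential, since otherwise the diagonal term $\E\|x_i-z_i\|^2$ would differ from $M$ and the comparison could break. The second is the linear-algebra fact that a doubly stochastic matrix with maximal trace is the identity, which is what turns ``$\pi\neq\pi_\Delta$'' into ``$\sum_{i\neq j}\pi_{ij}>0$''. Finiteness of the second moments of $X,Y,Z$ is assumed implicitly so that $M$ and $V$ are finite.
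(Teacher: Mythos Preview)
Your proposal is correct and follows essentially the same approach as the paper: split the squared distance into the $y$- and $x$-parts, use independence to see that the expected $x$-part contributes the same constant for every index pair, and conclude via the strictly positive off-diagonal $y$-contribution. Your argument that $V>0$ (via $\E\|Y-Y'\|^2=0\Rightarrow Y$ a.s.\ constant) is a little slicker than the paper's explicit construction of two separated sets in $A$, and you make the ``doubly stochastic with full trace $\Rightarrow$ identity'' step explicit, but these are minor presentational differences only.
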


Now we will numerically verify this in the case of image denoising on MNIST \cite{deng2012mnist}. We draw $100$ random samples from the MNIST dataset $(x_i)_{i=1}^{100}$ and add noise with standard deviation $0.1$ on them to create samples from the joint distribution $(Y,X)$. Then we randomly sample 100 $(z_i)_{i=1}^{100}$ from the uniform distribution on $28 \times 28$. We use the python optimal transport package (POT) \cite{flamary2021pot} to estimate the optimal coupling between the discrete samples $(y_i,x_i)$ and $(y_i,z_i)$ which can be seen in Fig. \ref{fig_coup}. Thus in this example it is reasonable to assume that with random draws of $z,y$ and $x$ one should expect the optimal plan to be diagonal, even with respect to $W_1$.

\begin{figure}
    \centering
    \includegraphics[scale = 0.6]{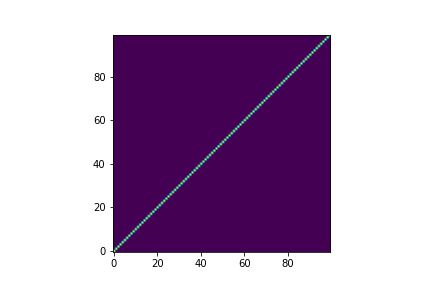}
    \caption{Optimal coupling plan in the MNIST example.}
    \label{fig_coup}
\end{figure}
\begin{remark}\label{comp:cond} In the special case where $P_{Y,X}= \sum_{i=1}^n\delta_{(y_i,x_i)}, P_{Y,Z}=\sum_{i=1}^n\delta_{(y_i,z_i)}$ with $y_i\neq y_j$ for $i\neq j$ the computation of the conditional Wasserstein distance is reduced to the computation of the distances $\Vert x_i-z_i\Vert$ i.e. $W^2_{2,Y}(P_{Y,X},P_{Y,Z})=\frac 1 n\sum_i \Vert x_i-z_i\Vert ^2$
\end{remark}
Lastly, in order to obtain a cost for which the optimal transport plan $\alpha$ almost fulfills $\pi^{1,3}_{\sharp}\alpha=\Delta_{\sharp}P_Y$ we define the metric $d_\beta((y_1,x_1),(y_2,x_2)) = \Vert x_1 - x_2 \Vert + \beta \Vert y_1-y_2 \Vert.$ For big values of $\beta$ it thus is very costly to move mass in $y$-direction.

\begin{proposition}\label{prop_beta}
Consider the Wasserstein metric $W_{p,d_\beta}$ with respect to the metric $d_\beta$. Then for $\beta \rightarrow \infty$ we have that every sequence of optimal transport plans $\alpha^{\beta}$ with respect to $W_{p,d_\beta}$ between measures $P_{Y,X}$ and $P_{Y,Z}$ has diagonal cost going to zero, i.e.,

\[\int_{A^2} \|y_1-y_2\|^p d{\pi^{1,3}}_{\#}(\alpha^{\beta}) \rightarrow 0.\]

\end{proposition}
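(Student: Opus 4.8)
The plan is to use the diagonal coupling as a $\beta$-independent competitor: since it moves no mass in the $y$-direction, its $d_\beta^p$-cost is bounded uniformly in $\beta$, and this uniform bound then forces the $\beta^p$-weighted $y$-displacement of any optimal plan to vanish.

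\emph{Step 1: a uniform upper bound.} I would pick, via Corollary~\ref{cor_plan}, a plan $\alpha_\Delta\in\Gamma_Y(P_{Y,X},P_{Y,Z})$ attaining $W_{p,Y}$. Because $\pi^{1,3}_{\#}\alpha_\Delta=\Delta_{\#}P_Y$, we have $y_1=y_2$ for $\alpha_\Delta$-a.e.\ $(y_1,x_1,y_2,x_2)$, so $d_\beta((y_1,x_1),(y_2,x_2))=\|x_1-x_2\|$ on $\supp\alpha_\Delta$ and
\[
\int d_\beta\big((y_1,x_1),(y_2,x_2)\big)^p\,\d\alpha_\Delta=\int\|x_1-x_2\|^p\,\d\alpha_\Delta=W_{p,Y}^p(P_{Y,X},P_{Y,Z})=:C.
\]
The constant $C$ is finite (Proposition~\ref{cond:plan} together with the finite-$p$-th-moment assumption and $\|x_1-x_2\|^p\le 2^{p-1}(\|x_1\|^p+\|x_2\|^p)$) and, crucially, independent of $\beta$. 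Since $\alpha_\Delta$ is an admissible — though generally suboptimal — coupling for $W_{p,d_\beta}$, this yields $W_{p,d_\beta}^p(P_{Y,X},P_{Y,Z})\le C$ for all $\beta>0$.

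\emph{Step 2: squeezing the diagonal cost.} For an optimal plan $\alpha^\beta$ I would apply the elementary inequality $(a+b)^p\ge b^p$ for $a,b\ge 0$, with $a=\|x_1-x_2\|$ and $b=\beta\|y_1-y_2\|$:
\[
C\ \ge\ W_{p,d_\beta}^p(P_{Y,X},P_{Y,Z})\ =\ \int\big(\|x_1-x_2\|+\beta\|y_1-y_2\|\big)^p\,\d\alpha^\beta\ \ge\ \beta^p\int\|y_1-y_2\|^p\,\d\alpha^\beta.
\]
Rewriting the last integral as $\beta^p\int_{A^2}\|y_1-y_2\|^p\,\d\pi^{1,3}_{\#}(\alpha^\beta)$ and dividing by $\beta^p$ gives $\int_{A^2}\|y_1-y_2\|^p\,\d\pi^{1,3}_{\#}(\alpha^\beta)\le C/\beta^p\to 0$ as $\beta\to\infty$, which is the claim.

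\emph{Loose ends and where the difficulty lies.} I would also note that $d_\beta$ is a genuine metric for $\beta>0$ and that $W_{p,d_\beta}$ is finite — indeed $d_\beta\le\sqrt 2\,(1+\beta)\|\cdot\|$ on $A\times B$, so $W_{p,d_\beta}\le\sqrt 2\,(1+\beta)\,W_p<\infty$ — so the statement is non-vacuous; and that optimal plans $\alpha^\beta$ exist for each $\beta$ by the standard lower-semicontinuity and tightness argument for optimal transport \cite{villani2009optimal}, the cost $d_\beta^p$ being continuous and the marginals fixed. There is essentially no obstacle here: once the $\beta$-uniform bound $C$ is in place the result is one line. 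The only point deserving care is precisely that $C$ must not depend on $\beta$, which is exactly why the diagonal competitor — whose transport cost registers no movement in $y$ — is the right choice; any competitor with nonzero $y$-displacement would contribute a $\beta$-dependent term and the argument would fail.
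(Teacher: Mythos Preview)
Your proof is correct and follows essentially the same line as the paper's: use the optimal plan in $\Gamma_Y$ (from Corollary~\ref{cor_plan}) as a $\beta$-independent competitor to bound $W_{p,d_\beta}^p$ by a constant $C$, then drop the $x$-part of the cost to isolate $\beta^p\int\|y_1-y_2\|^p\,\d\pi^{1,3}_{\#}\alpha^\beta$ and divide. Your write-up is in fact cleaner than the paper's, which has some missing $p$-exponents, and you make explicit the rate $C/\beta^p$ and the fact that $C<\infty$ under the finite-moment assumption.
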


\section{Conditional Sinkhorn Generators}
We now outline a simple idea in order to leverage our conditional Wasserstein distance to train a conditional generator for solving Bayesian inverse problems. We base our algorithms on the idea of GANs \cite{gan_paper}, conditional GANs \cite{mirza2014conditional} and Sinkhorn generative models \cite{genevay18}. In particular, in \cite{genevay18} it is outlined how to train a generative model via the Sinkhorn divergence $S_{\varepsilon}$, which interpolates between Wasserstein and MMD. Since the Sinkhorn divergence approximates the Wasserstein distance for small blurs $\varepsilon$ we can use it to efficiently learn conditional generators. 

We benchmark the following algorithms: 
\begin{itemize}
    \item Joint Sinkhorn generator: Train via minimizing $S_{\varepsilon}(P_{Y,X},P_{Y,G(Y,Z)})$ for a generator $G$ with latent $P_Z \sim \mathcal{N}(0,I)$. 
\item $\beta$-posterior Sinkhorn generator: In light of Proposition \ref{prop_beta}, we can also consider the Sinkhorn divergence $\tilde{S}_{\varepsilon}$ with respect to the cost $d((y_1,x_1),(y_2,x_2)) = \Vert x_1 - x_2 \Vert^2 + \beta \Vert y_1-y_2 \Vert^2$ for large $\beta$. 
\item Diagonal generator: We draw first $x \sim P_x$, then calculate $y$ according to the forward model and assume that we obtain distinct samples for $y$. Thus by Remark \ref{comp:cond} our conditional Wasserstein distance $W_{2,Y}$ is minimized by just matching $z_i$ and $x_i$ via a simple MSE loss. 
\end{itemize}  
Runtime wise a training run for the Random Images experiment takes about 7 seconds for the diagonal GAN as this can be trained using a MSE. The joint flow trains in about 56 seconds and the $\beta$-Sinkhorn is slowest with 103 seconds on an NVIDIA GeForce RTX 2060.

Note that the aim of the next experiments is to underline our theory and not to outperform other algorithms such as conditional normalizing flows on these problems. 
\subsection{Mixture Models}
First we will check its ability to estimate the posteriors in a Bayesian inverse problem considered in \cite{HHS22} with analytically known posteriors. We base our code on \cite{HHS22} and refer to the details in \cite{HHS22}, but essentially it is a linear inverse problem $Y=AX + \eta$ with Gaussian likelihood $\eta$ and $X \sim P_X$ distributed according to a Gaussian Mixture Model. Then in particular the posteriors $P_{X|Y=y}$ are Gaussian Mixture Models and therefore we are able to evaluate the Sinkhorn divergence. Furthermore, $\varepsilon$ is chosen small enough such that it holds $S_{\varepsilon} \approx \frac{1}{2}W_2$.

\begin{figure}
    \begin{subfigure}[b]{0.32\textwidth}
    \includegraphics[width=4.5cm]{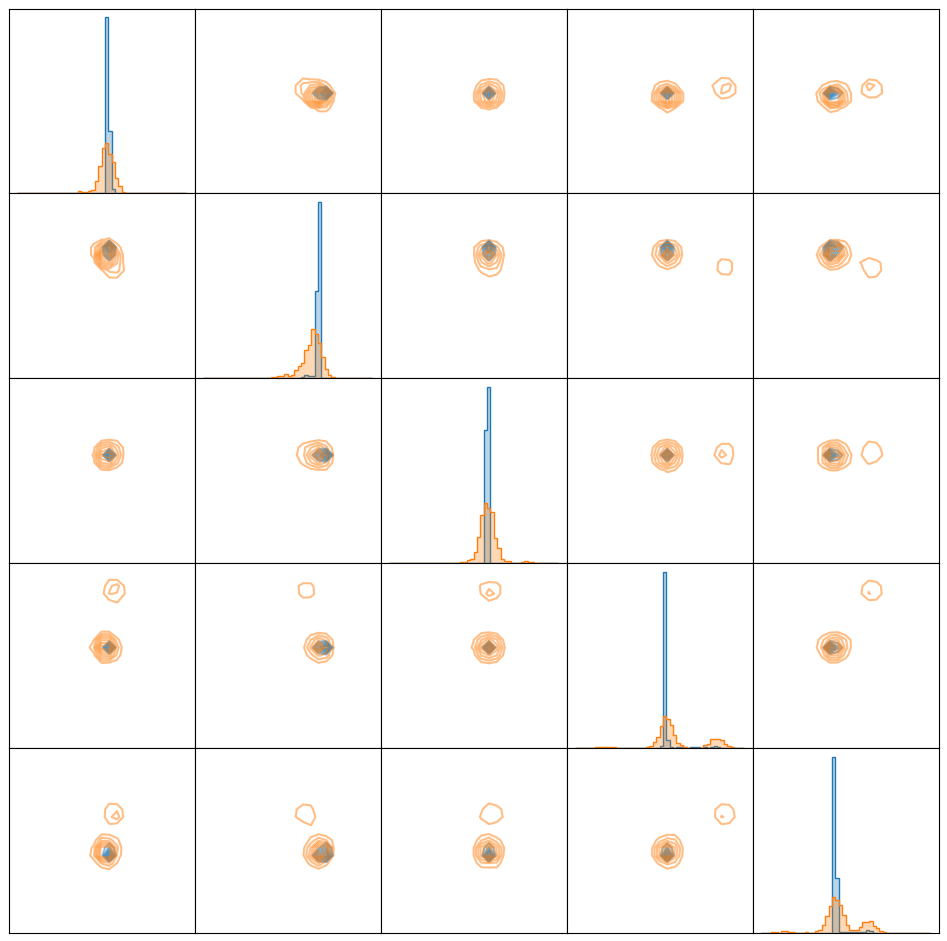}
    \caption{$\beta$-Sinkhorn generator.}
    \end{subfigure}
    \begin{subfigure}[b]{0.32\textwidth}
    \includegraphics[width=4.5cm]{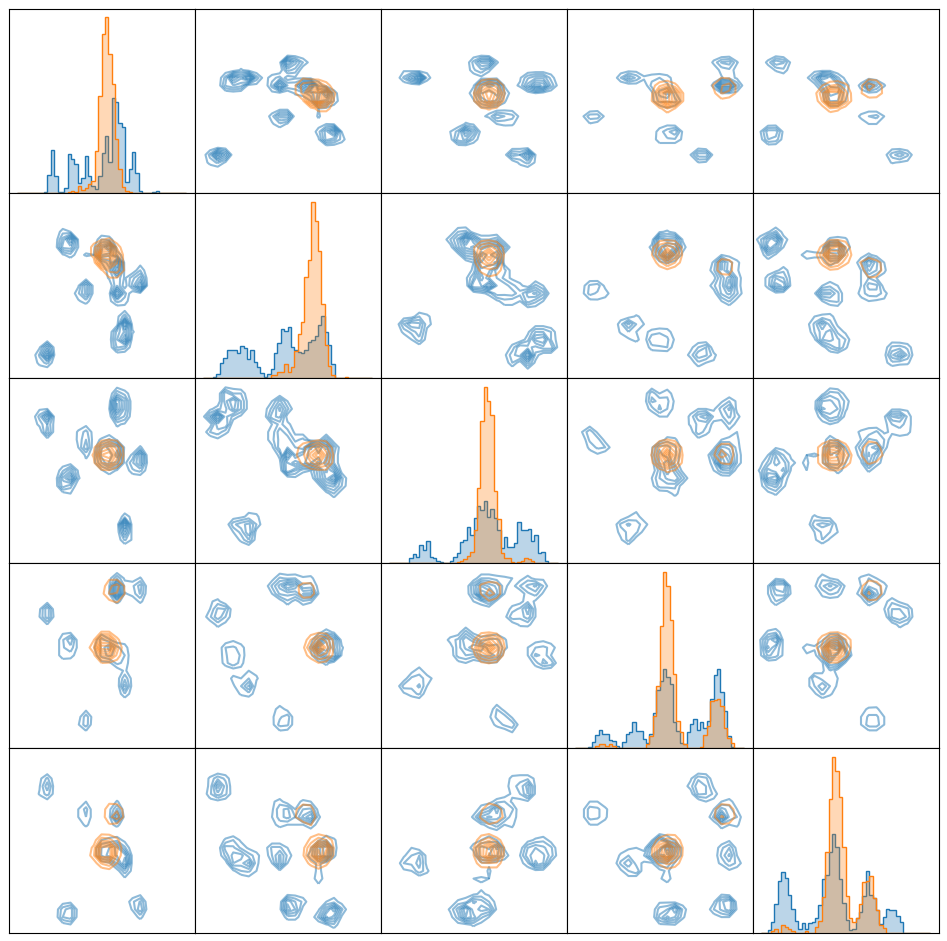}
    \caption{Joint Sinkhorn.}
    \end{subfigure}
    \begin{subfigure}[b]{0.32\textwidth}
    \includegraphics[width=4.5cm]{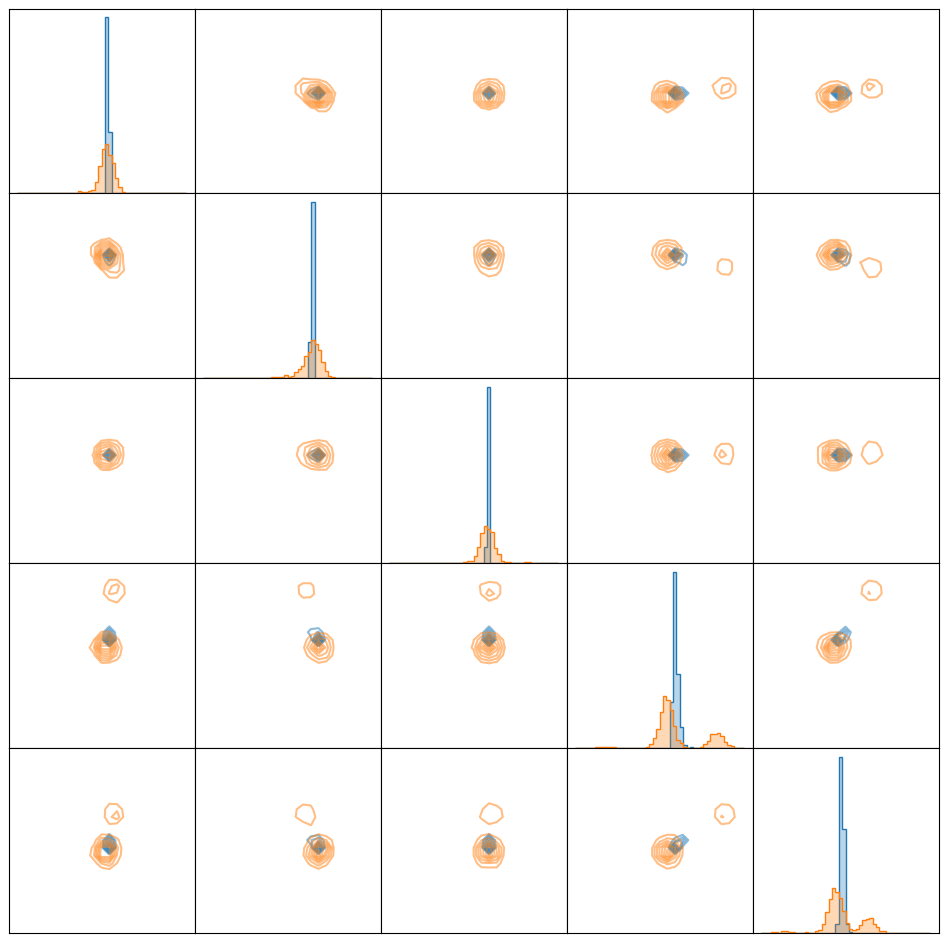}
    \caption{Diagonal Sinkhorn.}
    \end{subfigure}
    \caption{Histograms for the two conditional Sinkhorn generators.}
    \label{fig_mix}
\end{figure}

In particular the forward operator is chosen diagonally with $A_{i,i} = \frac{0.2}{i+1}$, $\eta \sim \mathcal{N}(0,0.03^2)$ and the $P_X$ is chosen with 8 modes, random means in $[-1,1]^5$ and variance of $0.01\ I$. We train three conditional generators $G$ are standard feedforward neural networks minimizing the different losses for 5000 iterations using the Adam optimizer \cite{kingma2014adam} with a batch size of 1024. The differentiable calculation of the Sinkhorn divergences is done via the GeomLoss python package \cite{feydy2019interpolating}. We average the quantity $\mathbb{E}_y\left[S_{\varepsilon}(P_{X|Y=y}, G(y,\cdot)_{\#}P_Z)\right]$ over 100 $y$ where each measure is approximated using 1000 samples, which we refer to as the expected posterior error. Furthermore we evaluate the joint Sinkhorn divergence error for $5000$ $(y,x)$ pairs with the generated counterparts $(y,G_y(Z))$.

\begin{table}[t]
\begin{center}
\begin{tabular}[t]{c|ccc} 
              & joint   & Baseline & $\beta$-Sinkhorn \\
\hline
Joint Error  & 0.022  & 0.034   &  0.020   \\      

Expected Posterior         & 0.762  & 0.089  &  0.087   \\ 
\end{tabular}
\caption{Comparison of different methods in the Mixture example.}
\label{tab:mix}
\end{center}
\end{table}

One can see the results in table \ref{tab:mix}, where we compare the baseline (diagonal) method with the joint Sinkhorn method and the $\beta$-Sinkhorn. As expected from the theory, the diagonal and the $\beta$-Sinkhorn perform well for the expected posterior error. For the joint distribution the diagonal performs much worse than the other ones. This can be explained by the conditioning collapse phenomenon \cite{shahbazi2022collapse}. 

We depict three exemplary posterior histogram plots in Figure \ref{fig_mix}, where one can see that the naive joint Sinkhorn generator predicts something close to the prior whereas the $\beta$-Sinkhorn does a better job approximating the posterior distribution. 

\subsection{Random Images}
In this experiment, we choose the prior distribution $X \sim U([0,1]^d)$ where $d = 16$ is the dimension. We set $Y = \sum_{i=1}^d \frac{X_i}{d}$. The inverse problem consists in sampling $4\times4$-images from the observation $y \in \mathbb{R}$ such that the images $x$ given $y$ have sum $dy$. This problem was suggested in \cite{braitinger}. We train all algorithms with batch size 1024 for 5000 iterations using Adam \cite{kingma2014adam} with learning rate 1e-4.

Since we know that all the generated samples $x$ by the generator $G$ for fixed $y$ should have $\sum_{i=1}^d \frac{x_i}{d} = y$, we propose to evaluate the following two metrics. First we sample random $Y \sim P_Y$ and $Z \sim P_Z$ according to the latent distribution. Then we compare the same joint error $S_{\varepsilon}(P_{Y,X}, P_{Y,G(Y,\cdot)_{\#}P_Z})$ for the three generators $G$. Secondly, we evaluate how close the predicted samples are to $y$ therefore we also test on the "resimulation" error \cite{kruse2021benchmarking}, i.e., we test on $E_y[\frac{1}{K}\sum_{i=1}^K \Vert G(y,z_k)-y\Vert^2]$ for randomly drawn latents $z_k \sim P_Z$ and the different models $M$. Furthermore, the expectation is approximated via $100$ draws from $P_Y$. This measures how well the posterior is adapted to the forward model. 

\begin{table}[t]
\begin{center}
\begin{tabular}[t]{c|ccc} 
              & joint   & Baseline & $\beta$-Sinkhorn \\
\hline
Joint Error         & 0.279  & 0.610  &  0.285   \\ 
Average Resim.   & 0.004  & 0.0001   &  0.0005   \\    
\end{tabular}
\caption{Comparison of different methods in the random images example.}
\label{tab:img}
\end{center}
\end{table}

We train the three generator networks of the same size for 5000 iterations. We evaluate the two metrics and average this over 10 training runs. The joint Sinkhorn divergence is approximated using $5000$ samples, the resimulation error is approximation using $K = 2000$ latent samples for each $y$. The results can be seen in Table~\ref{tab:img}, where we can infer that the $\beta$-Sinkhorn generator gives a nice tradeoff between a good joint Sinkhorn divergence as well as a good resimulation error. Note that this also nicely aligns with our theory, as the baseline can be obtained by taking $\beta \rightarrow \infty$ and is therefore optimizing the sharpest posterior bound. Generated samples of the method can be found in the appendix. 
\section{Conclusions}
In this paper we introduced the conditional Wasserstein distance, inspired from applications in inverse problems. We are able to rewrite this as an expectation with respect to the observation and are therefore able to directly infer posterior guarantees when trained on those. Furthermore, we calculated its dual when the probability measures are compactly supported and recovered well-known conditional Wasserstein GAN losses. Furthermore, we outlined some sufficient conditions under which this conditional Wasserstein distance equals the usual one. However, finding necessary and sufficient conditions under which couplings are contained in our restricted set, is an open question. One way to approach this could be by using equivalent PDE formulations for finding the dual such as done in \cite{asokan2023data, mroueh2018sobolev}.
\bibliographystyle{abbrv}
\bibliography{references}

 \appendix
\section{Proof of Proposition \ref{prop:3plans}}
    \begin{proof} 
    In order to make notation easier we permute the factors and view $\alpha\in \Gamma_Y$ as measure on $A^2\times B^2$ and thus we have to show that $\pi^{2,3,4}_{\sharp}$ is a bijection. We will show that $(\Delta\circ \pi^2,\pi^3,\pi^4 )_{\sharp}$ is the inverse of $\pi^{2,3,4}_{\sharp}$. Since $id_{(A\times B^2)} = \pi^{2,3,4}\circ(\Delta\circ\pi^2,\pi^3,\pi^4)$ we are left to show that $( \Delta\circ\pi^2, \pi^3,\pi^4)_{\sharp}\circ\pi^{2,3,4}_{\sharp}=id_{\Gamma_Y(P_{Y,X},P_{Y,Z})}$ which follows from 
    \begin{align*}
    &\int_{A^2\times B^2}f(y_1,x_1,y_2,x_2)\d ( \Delta\circ \pi^2,\pi^3,\pi^4)_{\sharp}\pi^{2,3,4}_{\sharp}\alpha \\
    &= \int_{A^2\times B^2}f(y_2,x_1,y_2,x_2)\d \alpha\\
    &= \int_{A^2\times B^2}f(y_2,x_1,y_2,x_2)\d \gamma_{y_1,y_2}\d \pi^{1,3}_{\sharp}\alpha\\
    &= \int_{B^2\times A}f(y,x_1,y,x_2)\d \gamma_{y,y}(x_1,x_2)\d P_Y(y)\\
    &= \int_{B^2\times A^2}f(y_1,x_1,y_2,x_2)\d \gamma_{y_1,y_2}(x_1,x_2)\d \Delta_{\sharp}P_Y(y_1,y_2)\\
    &= \int_{A^2\times B^2}f\d\alpha
    \end{align*}
    for all $\alpha\in\Gamma_Y(P_{Y,X},P_{Y,Z})$ and all measurable functions $f:A^2\times B^2\to [0,\infty)$.
    
    In order to show the second claim we note that $\gamma_{y,y}$ is the disintegration of $\pi^{2,3,4}_{\sharp}\alpha$ if $\gamma_{y_1,y_2}$ is a disintegration of $\alpha\in\Gamma_Y(P_{Y,X},P_{Y,Z})$ w.r. $\pi^{1,3}$ . This follows from
        \begin{align*}
        \int f(y,x_1,x_2)\d\gamma_{y,y}\d P_Y & = \int f(y_2,x_1,x_2)\d\gamma_{y_1,y_2}\d \Delta_{\sharp}P_Y\\
        &= \int f(y_2,x_1,x_2) \d \alpha \\&= \int f(y_2,x_1,x_2)\d \pi^{2,3,4}_{\sharp}\alpha.
        \end{align*}
        Thus it follows from \eqref{mod:was} that 
        \begin{align*}
        &\int_{(B\times A)^2}\|(x_1,y_1)-(x_2,y_2)\|^p\d \alpha \\
        &= \int_A\int_{B^2} \|x_1-x_2\|^p\d \pi^{2,3,4}_{\sharp}\alpha
        \end{align*}
\end{proof}
\section{Exchanging supremum and infimum}
\label{app:ex}
The proof of strong duality relies on the following Minimax principle, from \cite[Theorem 7 Chapter 6]{aubin2006applied}.
\begin{theorem} \label{minmax}
Let $X$ be a convex subset of a topological vector space, and $Y$ be a convex subset of a vector space. Assume $f: X \times Y \to \R$ satisfies the following conditions:
\begin{enumerate}
    \item For every $y\in Y$ the map $x\to f(x,y)$ is lower semi continuous and convex.
    \item There exists $y_0$ such that $x\to f(x,y_0)$ is inf-compact i.e the set $\{x\in X: f(x,y_0) \leq a\}$ is relatively compact for each $a \in \R$.
    \item For every $x\in X$ the map $y \to f(x,y)$ is convex.
\end{enumerate}
Then we have, 
\begin{align*}
    \inf_{x\in X} \sup_{y\in Y} f(x,y) = \sup_{y\in Y} \inf_{x\in X} f(x,y)
\end{align*}
\end{theorem}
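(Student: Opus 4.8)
The plan is to establish the two inequalities separately. The inequality $\sup_{y}\inf_{x}f\le\inf_{x}\sup_{y}f$ is automatic: for every fixed $x_0$ and every $y$ one has $\inf_{x}f(x,y)\le f(x_0,y)\le\sup_{y'}f(x_0,y')$, and taking first the supremum over $y$ on the left and then the infimum over $x_0$ on the right yields the claim. The entire content therefore lies in the reverse inequality $\inf_{x}\sup_{y}f\le\sup_{y}\inf_{x}f=:c$ (the cases $c=\pm\infty$ being immediate, so I assume $c$ finite), which I would prove by a finite-intersection argument powered by the convexity and compactness hypotheses.

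First I would use the inf-compactness assumption (2) to pass to a compact effective domain. Fixing $a>c$, the set $K:=\overline{\{x\in X: f(x,y_0)\le a\}}$ is relatively compact by hypothesis and convex since $f(\cdot,y_0)$ is convex; moreover any $x\notin K$ has $\sup_y f(x,y)\ge f(x,y_0)>a$, so replacing $X$ by $K$ leaves $\inf_x\sup_y f$ unchanged whenever it is $\le a$. Lower semicontinuity of each $f(\cdot,y)$ together with compactness of $K$ then ensures the relevant infima over $x$ are attained. After this reduction it suffices to produce a single $x^\ast\in K$ with $\sup_{y\in Y}f(x^\ast,y)\le c$, for then $\inf_x\sup_y f\le c$, and letting $a\downarrow c$ concludes.

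To locate such an $x^\ast$ I would run a compactness/finite-intersection scheme. For $\varepsilon>0$ and $y\in Y$ put $S_y:=\{x\in K: f(x,y)\le c+\varepsilon\}$; each $S_y$ is closed (lower semicontinuity), convex (convexity in $x$), and nonempty (since $c\ge\inf_x f(x,y)$). As $K$ is compact, $\bigcap_{y\in Y}S_y\ne\emptyset$ follows once the finite-intersection property is checked, so the heart of the matter is: given $y_1,\dots,y_n$, find $x$ with $f(x,y_i)\le c+\varepsilon$ for all $i$. I would obtain this from the more elementary finite-dimensional minimax over the probability simplex $\Delta_n\subset\R^n$: the map $(x,\lambda)\mapsto\sum_i\lambda_i f(x,y_i)$ is convex and lower semicontinuous in $x$ and affine in $\lambda$, so on the compact convex sets $K$ and $\Delta_n$ one may exchange $\min_x\max_\lambda=\max_\lambda\min_x$. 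If the finite intersection were empty, then $\min_x\max_i f(x,y_i)\ge c+\varepsilon$, whence there is $\lambda^\ast\in\Delta_n$ with $\min_x\sum_i\lambda_i^\ast f(x,y_i)\ge c+\varepsilon$; invoking concavity in $y$ (the form actually needed, which holds with equality in the affine Lagrangian of interest) gives $\sum_i\lambda_i^\ast f(x,y_i)\le f\big(x,\textstyle\sum_i\lambda_i^\ast y_i\big)$ with $\bar y:=\sum_i\lambda_i^\ast y_i\in Y$ by convexity of $Y$, so that $\inf_x f(x,\bar y)\ge c+\varepsilon$, contradicting $c=\sup_y\inf_x f$.

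The main obstacle is precisely this finite step, where every hypothesis is used at once: convexity in $x$ makes the sets $S_y$ convex and reduces the problem to a finite-dimensional minimax over the simplex, lower semicontinuity makes them closed and the inner infima attained, and the one-sided (concave, or affine) structure in the second variable together with convexity of $Y$ is what collapses the convex combination $\sum_i\lambda_i^\ast y_i$ back to a single admissible point $\bar y$. I would therefore isolate the finite-dimensional minimax swap as a lemma, provable by separation/Hahn--Banach or linear programming duality since the $\lambda$-dependence is affine, and treat the reduction to $K$ and the finite-intersection passage as the routine surrounding steps. In the application to the Lagrangian $L(\alpha,f,g)$ the function is affine in both arguments, so the second-variable structure holds with equality and the separation step becomes a direct linear one.
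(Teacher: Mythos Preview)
The paper does not prove this statement itself; it is quoted as \cite[Theorem 7, Chapter 6]{aubin2006applied} and invoked as a black box in the proof of Theorem~\ref{thm:change}. There is therefore no in-paper argument to compare your proposal against.

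Your outline follows the standard Sion/Ky Fan route (reduce to a compact sublevel set via the inf-compactness hypothesis, then run a finite-intersection argument whose finite step is a minimax over the simplex $\Delta_n$), and it is essentially correct. Two comments. First, you rightly flag that the key step $\sum_i\lambda_i^\ast f(x,y_i)\le f(x,\bar y)$ requires \emph{concavity} of $f(x,\cdot)$, not the convexity written in hypothesis~(3); under convexity alone the conclusion is false in general (take $f(x,y)=(x-y)^2$ on $[0,1]\times[0,1]$, where $\inf_x\sup_y f=\tfrac14$ but $\sup_y\inf_x f=0$). In the paper's application the Lagrangian $L$ is affine in $(f,g)$, so the distinction is moot, exactly as you note. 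Second, a small technicality in your reduction: the sublevel set $\{x\in X:f(x,y_0)\le a\}$ is already closed in $X$ by lower semicontinuity, so under the usual reading of ``relatively compact'' it is itself compact and you need not pass to a closure $K$; taking the closure in the ambient topological vector space, as your notation suggests, could carry you outside $X$ where $f$ is undefined.
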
 

\begin{theorem}\label{thm:change}Let $A \subset \R^d, B \subset \R^n$ be compact, denote the Lagrangian by
\begin{align*}
    &L(\alpha,f,g):= \mathbb{E}_{P_{X,Y}}[f] + \mathbb{E}_{P_{Z,Y}}[g] + \\
    & \int_{(A\times B)^2} \|(y_1,x_1)-(y_2,x_2)\| -f(y_1,x_1) -g(y_2,x_2)  \d \alpha.
\end{align*}
Then it holds that
\begin{align*}
      \underset{\alpha\in M}{\mathsf{inf}} \sup_{f,g \in C_b} L(\alpha,f,g) = \sup_{f,g \in C_b}\underset{\alpha\in M}{\mathsf{inf}} L(\alpha,f,g) 
\end{align*}

 i.e
 \begin{align} \label{PrimDual}
     &\underset{\alpha\in \Gamma_Y(P_{Y,X}, P_{Y,Z})}{\mathsf{inf}}\int_{(A\times B)^2} \|(y_1,x_1)-(y_2,x_2)\|\d \alpha = \sup_{(f,g) \in D} \mathbb{E}_{P_{Y,X}}[f] + \mathbb{E}_{P_{Y,Z}}[g]
 \end{align}
\end{theorem}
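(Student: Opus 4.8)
The idea is to apply the Minimax Theorem~\ref{minmax} with $X=M$, equipped with the weak-$*$ topology it inherits as a subset of the dual of $C((A\times B)^2)$ (in which $M$ is a convex, weak-$*$ closed subset of a topological vector space), with $Y=C_b(A\times B)\times C_b(A\times B)$, and with $\Phi(\alpha,(f,g)):=L(\alpha,f,g)$. Writing $\psi_{f,g}(y_1,x_1,y_2,x_2):=\|(y_1,x_1)-(y_2,x_2)\|-f(y_1,x_1)-g(y_2,x_2)$, which is a bounded continuous function on the compact set $(A\times B)^2$, we have $L(\alpha,f,g)=\E_{P_{Y,X}}[f]+\E_{P_{Y,Z}}[g]+\int\psi_{f,g}\,\d\alpha$, so $\Phi$ is bi-affine. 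Once the three hypotheses of Theorem~\ref{minmax} are verified, its conclusion is precisely the asserted interchange of $\inf_{\alpha\in M}$ and $\sup_{f,g}$, and it then remains only to evaluate both sides to obtain~\eqref{PrimDual}.

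\textbf{Checking the hypotheses.} For fixed $(f,g)$, the map $\alpha\mapsto L(\alpha,f,g)$ is affine, hence convex, and, since $\psi_{f,g}$ is continuous, weak-$*$ continuous and in particular lower semicontinuous; this is hypothesis~(1). For fixed $\alpha$, the map $(f,g)\mapsto L(\alpha,f,g)$ is affine, hence convex; this is hypothesis~(3). The substantive point is hypothesis~(2), the inf-compactness of a section $\alpha\mapsto L(\alpha,f_0,g_0)$. Here the naive choice $f_0=g_0\equiv 0$ fails, because the cost vanishes on the diagonal: the measures $\alpha_n:=n\,\delta_{(y,x,y,x)}$ all lie in $M$ with $L(\alpha_n,0,0)=0$ but unbounded total mass. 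I would instead take $f_0\equiv g_0\equiv -c$ for any constant $c>0$, so that $\psi_{f_0,g_0}=\|(y_1,x_1)-(y_2,x_2)\|+2c\ge 2c>0$ and therefore $L(\alpha,f_0,g_0)\ge -2c+2c\,\alpha((A\times B)^2)$. Hence $\{\alpha\in M:L(\alpha,f_0,g_0)\le a\}$ consists of measures of uniformly bounded total mass on the compact metric space $(A\times B)^2$, which is relatively weak-$*$ compact by Banach--Alaoglu (and metrizable, as $C((A\times B)^2)$ is separable); since $M$ is weak-$*$ closed, the relative compactness holds within $M$.

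\textbf{Evaluating the two sides.} Fix $\alpha\in M$. If $\pi^{1,2}_\sharp\alpha\ne P_{Y,X}$, these being two finite Borel measures on the compact set $A\times B$, choose $f\in C_b(A\times B)$ separating them; then $t\mapsto L(\alpha,tf,0)$ is affine with nonzero slope, so $\sup_{f,g}L(\alpha,f,g)=+\infty$, and similarly if $\pi^{3,4}_\sharp\alpha\ne P_{Y,Z}$. If both marginal identities hold, all $f,g$ terms cancel and $\sup_{f,g}L(\alpha,f,g)=\int\|(y_1,x_1)-(y_2,x_2)\|\,\d\alpha$; moreover, as observed following the definition of $M$, an $\alpha\in M$ with $\pi^{1,2}_\sharp\alpha=P_{Y,X}$ automatically satisfies $\pi^{1,3}_\sharp\alpha=\Delta_\sharp P_Y$, so the set of such $\alpha$ is exactly $\Gamma_Y(P_{Y,X},P_{Y,Z})$, and $\inf_{\alpha\in M}\sup_{f,g}L=\inf_{\alpha\in\Gamma_Y}\int\|(y_1,x_1)-(y_2,x_2)\|\,\d\alpha=W_{1,Y}(P_{Y,X},P_{Y,Z})$. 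For the opposite order, fix $(f,g)$: if $f(y,x_1)+g(y,x_2)>\|x_1-x_2\|$ for some $y\in A$, $x_1,x_2\in B$, then $\alpha=t\,\delta_{(y,x_1,y,x_2)}\in M$ drives $L(\alpha,f,g)\to-\infty$, so $\inf_{\alpha\in M}L=-\infty$ unless $(f,g)\in D$; and for $(f,g)\in D$, every $\alpha\in M$ is supported where $y_1=y_2$, on which set $\psi_{f,g}$ equals $\|x_1-x_2\|-f(y,x_1)-g(y,x_2)\ge 0$, so $\int\psi_{f,g}\,\d\alpha\ge 0$ with equality at $\alpha=0$, giving $\inf_{\alpha\in M}L(\alpha,f,g)=\E_{P_{Y,X}}[f]+\E_{P_{Y,Z}}[g]$. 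Hence $\sup_{f,g}\inf_{\alpha\in M}L=\sup_{(f,g)\in D}\E_{P_{Y,X}}[f]+\E_{P_{Y,Z}}[g]$, and combining this with the minimax identity yields~\eqref{PrimDual}.

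\textbf{Main obstacle.} The only nonroutine ingredient is hypothesis~(2): one must notice that, because the cost vanishes on the diagonal, mass can escape to infinity under the multiplier $f_0=g_0=0$, and that using strictly negative constant potentials (equivalently, shifting the cost by a positive constant) restores a uniform total-mass bound and hence weak-$*$ relative compactness. Everything else reduces to bi-affinity, Riesz-type separation of finite measures on a compact space, and a cancellation computation.
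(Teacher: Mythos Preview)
Your proposal is correct and follows essentially the same route as the paper: apply the Aubin minimax principle (Theorem~\ref{minmax}) with $X=M$, $Y=C_b\times C_b$, check bi-affinity for hypotheses~(1) and~(3), and for hypothesis~(2) choose strictly negative constant potentials so that the integrand is bounded below by a positive constant, forcing a uniform total-mass bound and hence relative compactness. The paper takes $f_0=g_0\equiv -1$ and invokes tightness plus \cite[Theorem~8.6.7]{Bogachev2007}; you take $f_0=g_0\equiv -c$ and invoke Banach--Alaoglu, which is equivalent here. Your explicit remark that $f_0=g_0\equiv 0$ fails because of diagonal mass escape, and your inclusion of the evaluation of both sides of the minimax identity (which the paper carries out in the main text rather than in the appendix proof of Theorem~\ref{thm:change}), make your write-up slightly more self-contained, but the argument is the same.
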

\begin{proof} 
We will verify the conditions in Theorem \ref{minmax}
Recall that $M$ is the set of finite nonnegative Borel measures $\alpha$ on $(A\times B)^2$ s.t. there exists a finite nonegative finite measure $\beta$ on $B$ with $\pi^{1,3}_{\sharp}\alpha = (\Delta_B)_{\sharp} \beta$. Let $\mathcal{M}$ be the topological vector space of finite signed Borel measures on $(A\times B)^2$ with weak convergence topology. Thus since the pushforward is linear on $\mathcal{M}$ we conclude that $M$ is a convex subset. $M$ will serve as the set $X$ in Theorem \ref{minmax}, $C_b\times C_b$ will serve as $Y$ and $L$ will serve as $f$. \\
\textit{Verifying 1.} The map $\alpha \to L(\alpha,f,g)$ is linear and continuous on $M$ ($\Pi(P_{Y,X}, P_{Y,Z})$) under the weak convergence of measures. This follows from the fact that the integrand of $\alpha$ in $L(\alpha, f,g)$ is in $C_b((A\times B)^2)$. Hence we verified $1.$ of Theorem \ref{minmax}.\\
\textit{Verifying 3.} Note that for any $\alpha \in M$ the map $(f,g) \to L(\alpha,f,g)$ is linear in $(f,g)$ and therefore convex.  \\
\textit{Verifying 2.} Setting $f(y,x) =-1, g(y,x)=-1$ for all $(y,x)$, we will show the set 
\[M_a:= \{\alpha \in  M: L(\alpha,-1,-1) \leq a\}\]
is inf-compact.
Since the integrand is bounded from below by $2$ and $M$ only contains nonnegative measures, it is clear that the measures in $M_a$ are uniformly bounded in the total variation norm. Otherwise we would obtain that 
\begin{align*}
    &\int \left(\|(y_1,x_1)-(y_2,x_2)\| -f(y_1,x_1) -g(y_2,x_2)\right)  \d \alpha\\
    &= \int \left(\|(y_1,x_1)-(y_2,x_2)\|+2\right)  \d \alpha \rightarrow \infty,
    \end{align*}
    which contradicts  $L(\alpha,-1,-1) \leq a$.  Therefore the compactness of $A,B$ implies that  $M_a$ is a family of tight measures. By \cite[Theorem 8.6.7]{Bogachev2007}, the set $M_a$ is relatively compact in the weak topology. 
    Using Theorem \ref{minmax} we can conclude the proof.
\end{proof}

\section{Proof of Proposition ~\ref{prop:ind}}
\begin{proof}
Let $\alpha\in \Gamma(P_{Y,X},P_{Y,Z})$. Then
\begin{align*}
\int\|(y_1,x_1)-(y_2,x_2)\|^p\d \alpha &\geq \int \|x_1-x_2\|^p\d \alpha \\
&=\int \|x_1-x_2\|^p\d\pi^{2,4}_{\sharp}\alpha\\
&\geq \left(W_p(P_X,P_Y)\right)^p
\end{align*}
and hence $W_p(P_{Y,X},P_{Y,Z})\geq W_p(P_X,P_Z)$. Let now $\gamma \in \Gamma(P_X,P_Z)$ be an optimal coupling. Since $P_{Y,X}=P_Y\times P_X$ and $P_{Y,Z}=P_Y\times P_Z$ we have that modulo permutation of factors $\gamma\times \Delta_Y\in \Gamma_{P_Y}(P_{Y,X},P_{Y,Z})$. Thus 
\begin{align*}
W_{p,Y}(P_{Y,X},P_{Y,Z})^p &\leq \int \|(x_1,y_1)-(x_2,y_2)\|^p\d \gamma\times\Delta_Y\\
&= \int \|(x_1,y)-(x_2,y)\|^p \d P_Y\d \gamma\\
&= \int \|x_1-x_2\|^p\d\gamma \\
&= W_p(P_X,P_Z)^p.
\end{align*}
Since $W_{p,Y}(P_{Y,X},P_{Y,Z})\geq W_p(P_{Y,X},P_{Y,Z})$ we obtain equality everywhere and $\gamma\times\Delta_Y$ is an optimal coupling for $W_p$.
\end{proof}
\section{Proof of Proposition \ref{prop_deltas}}

\begin{proof}
We have for a plan $\alpha$ that
\begin{align*}
        \E[\tilde{c}(\alpha)] &= \E_{\prod_iP_{\xi_i}}\left[\sum_{i,j}\Vert (y_i,x_i)-(y_{j},z_j)\Vert^2 \alpha_{i,j}\right]\\
&=\sum_{i,j}\alpha_{i,j}\E_{P_{x_i}\times P_{z_j}}\left[\Vert x_i-z_j\Vert^2\right]
+\sum_{i,j} \alpha_{i,j}\E_{P_{y_i}\times P_{y_j}}\left[\Vert y_i-y_j\Vert^2\right]\\
&= \sum_{i,j}\alpha_{i,j}\E_{P_X\times P_Z}\left[\Vert x-z\Vert^2\right]+\sum_{i\neq j}\alpha_{i,j}\E_{P_Y\times P_Y}\left[\Vert y-\bar{y}\Vert^2\right]\\
&= \E\left[\Vert x-z\Vert^2\right] + \sum_{i\neq j}\alpha_{i,j}\E_{P_Y\times P_Y}\left[\Vert y-\bar{y}\Vert^2\right]
\end{align*}
Since $P_Y$ is not supported on only a single point we can chose sets $U_1,U_2\subset A$ such that $P_Y(U_i)> \epsilon>0$ and $\delta := \inf\{\Vert u_1-u_2\Vert^2:u_1\in U_1, u_2\in U_2\}>0$. Then $P_Y\times P_Y(U_1\times U_2)>\epsilon^2$ and $\Vert y-\bar{y}\Vert\geq \delta$ on $U_1\times U_2$. Thus
\begin{align*}
        \E_{P_Y\times P_Y}\left[\Vert y-\bar{y}\Vert^2\right]\geq \epsilon^2\delta>0
\end{align*}
and consequently since there exists $i\neq j$ with $\pi_{i,j}>0$ we can conclude that
\begin{align*}
\E[\tilde(c)(\pi)]-\E[\tilde(c)(\pi_{\Delta})]&= \sum_{i\neq j}\pi_{i,j}\E_{P_Y\times P_Y}[\Vert y-\bar{y}\Vert^2].
\end{align*}
\end{proof}

\section{Proof of Proposition~\ref{prop_beta} and Verification}
\begin{proof}
Denote by $\pi_{opt}$ the optimal transport plan associated to the conditional Wasserstein metric $W_{p,Y}$. 
This plan exists by Corollary~\ref{cor_plan} with finite cost $d_{\pi_{opt}}$. Furthermore $d=d_{\beta}$ a.e. for $\pi_{opt}$.  Then for an optimal plan $\alpha^\beta$ for $W_{p,d_\beta}$ we have that
\begin{align*}
    d_{\pi_{opt}}&=\int_{(A\times B)^2}d_{\beta}((y_1,x_1),(y_2,x_2)) d\pi_{opt} \\
                &\geq \int_{(A\times B)^2}d_{\beta}((y_1,x_1),(y_2,x_2))d\alpha^\beta\\
                &\geq\int_{B^2}\|x_1-x_2\|^pd\pi^{2,4}_{\sharp}\alpha^\beta\\
                &+ \beta\int_{A^2}\|y_1-y_2\|^pd\pi^{1,3}\alpha^\beta\\
                &\geq \beta\int_{A^2}\|y_1-y_2\|^pd\pi^{1,3}\alpha^\beta
\end{align*}
and thus the claim.

\end{proof}

To verify this convergence, we tested it on a numerical example. Here we sampled $X \sim \mathcal{N}(0,I_2)$ and $Z \sim U([0,1]^2)$. We calculated $Y = X + 0.02\ Z + 0.05\ \eta$, where $\eta \sim \mathcal{N}(0,I_2).$ Then we simulated the optimal transport plans $\pi^{\beta}$ for $\beta \in \{1,10,100,1000,10000,100000\}$ and counted the sum of nondiagonal elements, which gives the fraction of the total mass located on the nondiagonal. A plot (with a logarithmic x-axis) is given in Fig. \ref{fig_beta}. 

\begin{figure}
    \centering
    \includegraphics[scale = 0.6]{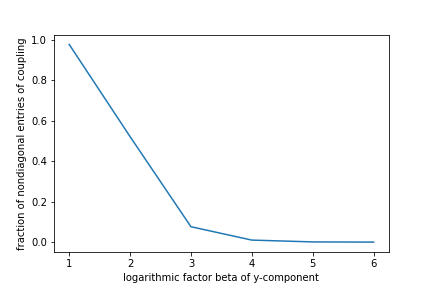}
    \caption{Verification of convergence of the transport plan to a diagonal one as $\beta \rightarrow \infty$.}
    \label{fig_beta}
\end{figure}
\section{Samples for random Images experiment}
In this section one can see generated samples. One can see that in all cases there is not any very obvious mode collapse going on, although judging from the numbers, the baseline Sinkhorn should have some diversity issues going on, see Fig.~\ref{fig_img}.
\begin{figure}
    \begin{subfigure}[b]{0.32\textwidth}
    \includegraphics[width=5cm]{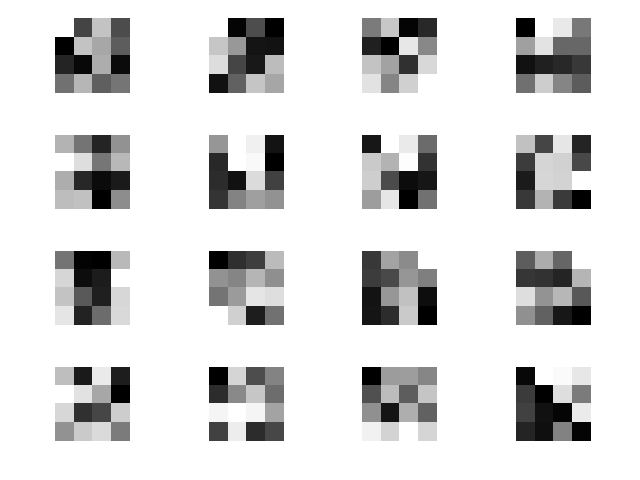}
    \caption{$\beta$-Sinkhorn generator.}
    \end{subfigure}
    \begin{subfigure}[b]{0.32\textwidth}
    \includegraphics[width=5cm]{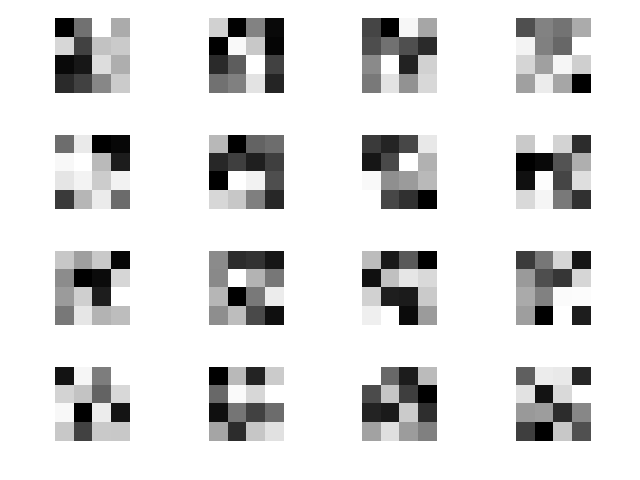}
    \caption{Joint Sinkhorn.}
    \end{subfigure}
    \begin{subfigure}[b]{0.32\textwidth}
    \includegraphics[width=5cm]{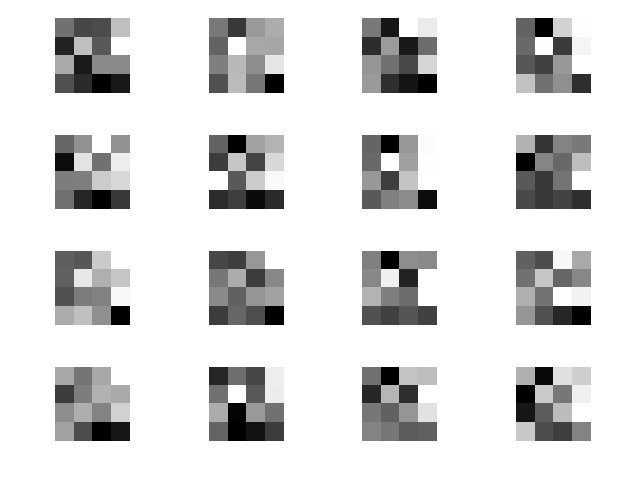}
    \caption{Diagonal Sinkhorn.}
    \end{subfigure}
    \caption{Histograms for the three conditional Sinkhorn generators.}
    \label{fig_img}
\end{figure}
\section{Standard deviation for the numerical examples}
In this section we provide also the standard deviations for the two numerical examples. 

For the image example we provide the standard deviations in table \ref{tab:mix_std}. The standard deviations are with respect to the 10 training runs of the different methods, showing that the differences between the runs are quite small. Only the conclusion that $\beta$-Sinkhorn is superior to baseline in terms of expected posterior error is not supported, as the standard deviation are bigger than the differences.

\begin{table}[t]
\begin{center}
\begin{tabular}[t]{c|ccc} 
              & joint   & Baseline & $\beta$-Sinkhorn \\
\hline
Joint Std         & 0.0026  & 0.0005  &  0.0015   \\ 
Exp Std.   & 0.042  & 0.007  &  0.01   \\    
\end{tabular}
\caption{Comparison of standard deviations of different methods in the mixture example.}
\label{tab:mix_std}
\end{center}
\end{table}

For the image example we provide the standard deviations in table \ref{tab:img_std}. We can see that the standard deviations are much smaller than the differences between the methods. 

\begin{table}[t]
\begin{center}
\begin{tabular}[h!]{c|ccc} 
              & joint   & Baseline & $\beta$-Sinkhorn \\
\hline
Joint Std         & 0.0009  & 0.0019  &  0.001   \\ 
Resim Std.   & 0.00025  & 8.53e-6   &  1.76e-5   \\    
\end{tabular}
\caption{Comparison of standard deviations of different methods in the random images example.}
\label{tab:img_std}
\end{center}
\end{table}

\end{document}